\def\eqref#1{equation~\ref{#1}}
\def\Eqref#1{Equation~\ref{#1}}
\def\1{\bm{1}}
\def\rmE{{\mathbf{E}}}
\def\rmS{{\mathbf{S}}}
\def\rmZ{{\mathbf{Z}}}
\def\vr{{\bm{r}}}
\def\vs{{\bm{s}}}
\def\vx{{\bm{x}}}
\def\vy{{\bm{y}}}
\def\vz{{\bm{z}}}
\DeclareMathAlphabet{\mathsfit}{\encodingdefault}{\sfdefault}{m}{sl}
\SetMathAlphabet{\mathsfit}{bold}{\encodingdefault}{\sfdefault}{bx}{n}
\def\gM{{\mathcal{M}}}
\definecolor{probpink}{RGB}{181, 23, 0}
\definecolor{probblue}{RGB}{86, 193, 255}
\definecolor{probgreen}{RGB}{0, 108, 101}
\definecolor{proborange}{RGB}{242, 114, 0}
\definecolor{probgrey}{RGB}{94, 94, 94}
\newtheorem{theorem}{Theorem}[section]
\newtheorem{lemma}[theorem]{Lemma}
\newtheorem{definition}[theorem]{Definition}
\newtheorem{constraint}[theorem]{Constraint}
\newtheorem{assumption}[theorem]{Assumption}
\DeclareMathOperator\supp{supp}
\title{State Combinatorial Generalization In Decision Making With Conditional Diffusion Models}
\author{\name Xintong Duan, Yutong He, Fahim Tajwar, Wentse Chen, Ruslan Salakhutdinov, Jeff Schneider \\
\addr Carnegie Mellon University \\
\email \{xintongd,yutonghe,ftajwar,wentsec,rsalakhu,jeff4\}@cs.cmu.edu
}
\begin{document}

\maketitle

\begin{abstract}
Many real-world decision-making problems are combinatorial in nature, where states (e.g., surrounding traffic of a self-driving car) can be seen as a combination of basic elements (e.g., pedestrians, trees, and other cars). Due to combinatorial complexity, observing all combinations of basic elements in the training set is infeasible, which leads to an essential yet understudied problem of \textit{zero-shot generalization to states that are unseen combinations of previously seen elements.} In this work, we first formalize this problem and then demonstrate how existing value-based reinforcement learning (RL) algorithms struggle due to unreliable value predictions in unseen states. We argue that this problem cannot be addressed with exploration alone, but requires more expressive and generalizable models. We demonstrate that behavior cloning with a conditioned diffusion model trained on successful trajectory generalizes better to states formed by new combinations of seen elements than traditional RL methods. Through experiments in maze, driving, and multiagent environments, we show that conditioned diffusion models outperform traditional RL techniques and highlight the broad applicability of our problem formulation. 
\end{abstract}

\section{Introduction}
\label{intro}

In many real-world decision-making tasks, environments can be broken down into combinations of fundamental elements. For instance, in self-driving tasks, the surrounding environment consists of elements like bicycles, pedestrians, and cars. Due to the exponential growth of possible element combinations, it is impractical to encounter and learn from every possible configuration during training. Rather than learning how to act in each unique combination, humans instead learn to interact with individual elements -- such as following a car or avoiding pedestrians -- and then extrapolate this knowledge to unseen combinations of elements.
Therefore, it is important to study the \textit{generalization to unseen combinations of known elements}, hereafter referred to as the out-of-combination (OOC) generalization, and to develop algorithms that can effectively handle these unseen scenarios.

Despite the success of reinforcement learning (RL) in decision-making tasks, many existing RL algorithms, particularly in offline settings, struggle to perform adeptly under state distribution shifts between training and testing, which typically occur when the learned policy visits states that differ from the data collection policy at test time \citep{levine2020offline, kakade2002approximately, lyu2022mildly, schulman2015trust}. While there have been works studying this problem, most of them either (1) focus on distribution shifts where the training and testing sets share the same support but different probability densities, without accounting for the presence of entirely new and unseen element combinations \citep{finn2017model, ghosh2022offline}, or (2) allow unseen elements in test combinations, which makes the problem ill-posed without introducing other potentially unrealistic assumptions~\citep{song2024compositional, zhao2022toward}. As a result, these works have failed to recognize and address the critical challenge of generalization to unseen combinations of seen elements and therefore fail to capture and compose existing knowledge for these fundamental elements.

In this work, we directly tackle the problem of state combinatorial generalization in decision-making tasks, where testing states consist of unseen combinations of elements encountered during training. As illustrated in Figure~\ref{fig:ooc}, our task differs conceptually from traditional distribution shift problems. Unlike simple distribution shifts, where the testing set remains within the support of the training set, our proposed task requires algorithms to handle out-of-support states that are never seen during training. This makes our problem both more challenging and more representative of real-world scenarios. At the same time, our OOC setting is better defined than the unconstrained out-of-support (OOS) setting, where testing states may include completely arbitrary unseen elements and therefore is inadequately formulated and intractable without other potentially impractical assumptions such as the existence of state distance metrics~\citep{song2024compositional} or isomorphic Markov decision processes (MDPs)~\citep{zhao2022toward}. By focusing on new combinations of known elements, our setting strikes a balance between real-world applicability and tractability, making it more suitable for standardized evaluation and formal analysis.

To facilitate this study, we first provide formal definitions of state combination and OOC generalization. We then demonstrate the challenge of this task by showing how traditional RL algorithms struggle to generalize in this setting due to unreliable value prediction, and the need for a more expressive policy. 
On the hunt for a suitable solution, we draw inspiration from the linear manifold hypothesis in diffusion models~\citep{chung2023fast,he2024manifold} and recent advances in combinatorial image generation~\citep{okawa2024compositional}, and present conditional diffusion models as a promising direction by showing how they can naturally account for the combinatorial structure of states into the diffusion process, enabling better generalization in OOC settings.
 
Experimentally, we evaluate the models on three distinct different RL environments: maze, driving, and multiagent games. All three settings are easily adaptable to the OOC generalization problem using existing RL frameworks, demonstrating the broad applicability of the combinatorial state setup. We demonstrate behavior cloning (BC) with a conditioned diffusion model outperforms not only vanilla BC and offline RL methods like CQL~\citep{kumar2020conservative} but also online RL methods like PPO~\citep{schulman2017proximal} in zero-shot OOC generalization. To explore factors contributing to its generalization, we visualize the states predicted by the conditioned diffusion model. Our results demonstrate that the model effectively captures the core attributes of each base element and accurately composes future states by integrating these fundamental attributes. We demonstrate that, while exploration is commonly used to enhance model generalization, OOC generalization relies instead on the use of a more expressive policy.

\begin{figure}[t]
\centering
\begin{subfigure}{\textwidth}
  \centering
  \includegraphics[
  width=\linewidth,
  keepaspectratio,
]{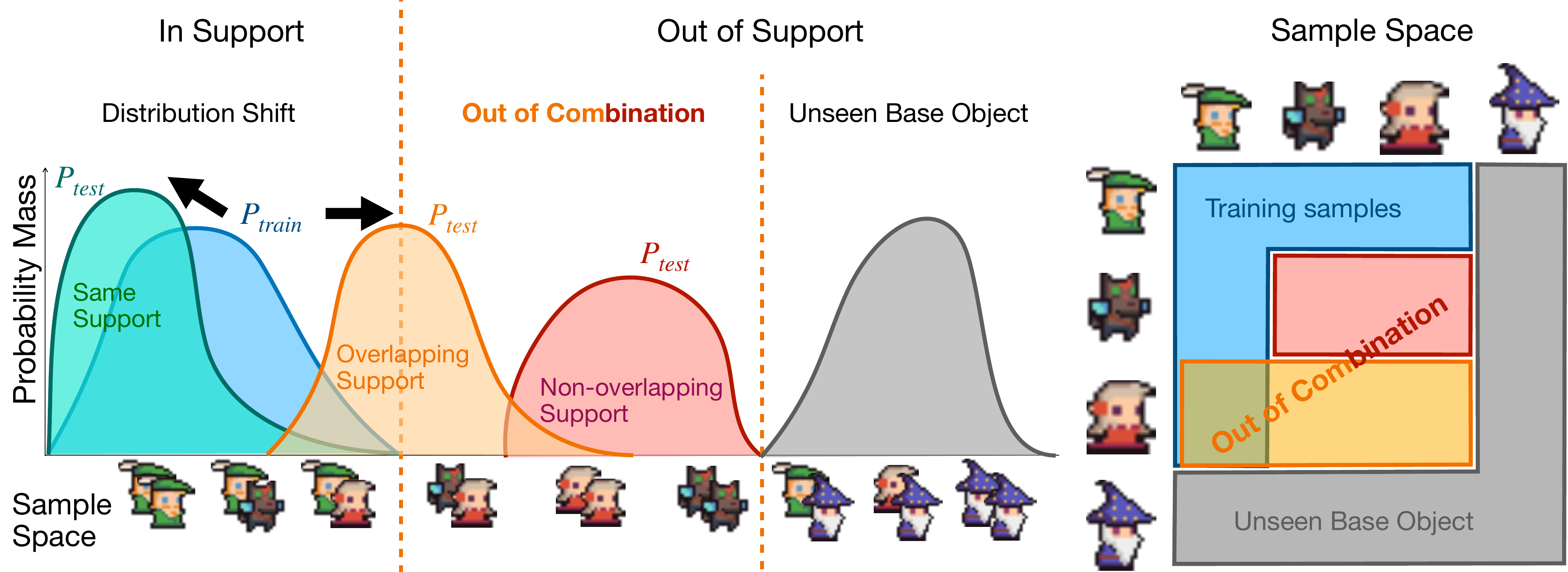}
  \label{fig:state}
\end{subfigure}
\caption{\footnotesize{Different forms of out-of-distribution states. \protect\includegraphics[height=3.0 ex] {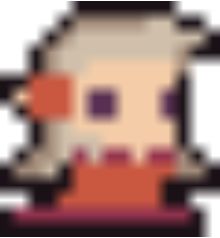} \protect\includegraphics[height=3.0 ex] {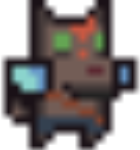} 
\protect\includegraphics[height=3.0 ex] {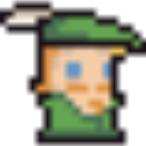} are seen base elements and \protect\includegraphics[height=3.0 ex] {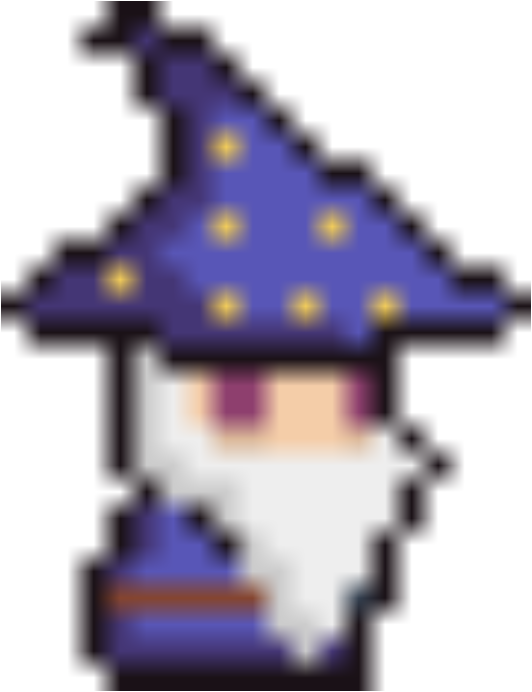} is unseen base element. Their combination forms the sample space. Classic distribution shift assumes \textcolor{probgreen}{states to have the same support but different probability density}. We study generalization for \textcolor{proborange}{out-of-com}\textcolor{probpink}{bination states} in this work, where test time state distribution has \textcolor{proborange}{different} and possibly \textcolor{probpink}{non-overlapping} support compared to \textcolor{probblue}{training states}.}}
\label{fig:ooc}
\vspace{-10pt}
\end{figure}

\section{Related Work}
\label{rel_work}

\subsection{Generalization in RL}

\paragraph{Meta RL} Meta RL is often seen as the problem of ``learning to learn'', where agents are trained on several environments sampled from a task distribution during meta-training and tested on environments sampled from the same distribution during meta-testing \citep{yu2020meta, finn2017model}. In the K-shot meta-RL setting, the model can interact with the testing environment K times during meta-testing time to update the model using reward \citep{finn2017model, mitchell2021offline, li2020multi, rakelly2019efficient}. Our setting is different from Meta-RL as the training and testing environments are sampled from different distributions and conditioning is provided while restricting K to zero. 

\paragraph{Unsupervised RL} Unsupervised RL aims to acquire broadly useful skills without access to task-specific rewards and then leverage these skills to accelerate or improve performance on downstream tasks with extrinsic rewards, often via an initial unsupervised “exploration” phase followed by a reward-supervised fine-tuning phase \citep{eysenbach2018diversity, laskin2022contrastive, jaderberg2016reinforcement}. While our setting also assumes reuse of previously learned information and focuses on generalization, it differs in that we operate under a fully offline, reward-agnostic setup, without access to or reliance on any implicit reward signals during training.

\paragraph{Zero-shot domain transfer} The problem of zero-shot domain transfer assumes that the model is trained and tested on different domains that might have some similarities but are sampled from different underlying distributions \citep{kirk2023survey}. The most widely used technique is domain randomization, approaching this problem by producing a wide range of contexts in simulation \citep{kirk2023survey, mehta2020active}. However, it commonly assumes that information about the testing environment is not accessible \citep{mehta2020active} and focuses more on sim2real problems \citep{kirk2023survey}, whereas we assume test time information is given through conditioning but restricting the training set to have narrow coverage.

\begin{wrapfigure}[18]{r}{0.55\linewidth}
    \vspace{-6mm}
    \centering
    \includegraphics[width=0.85\linewidth]{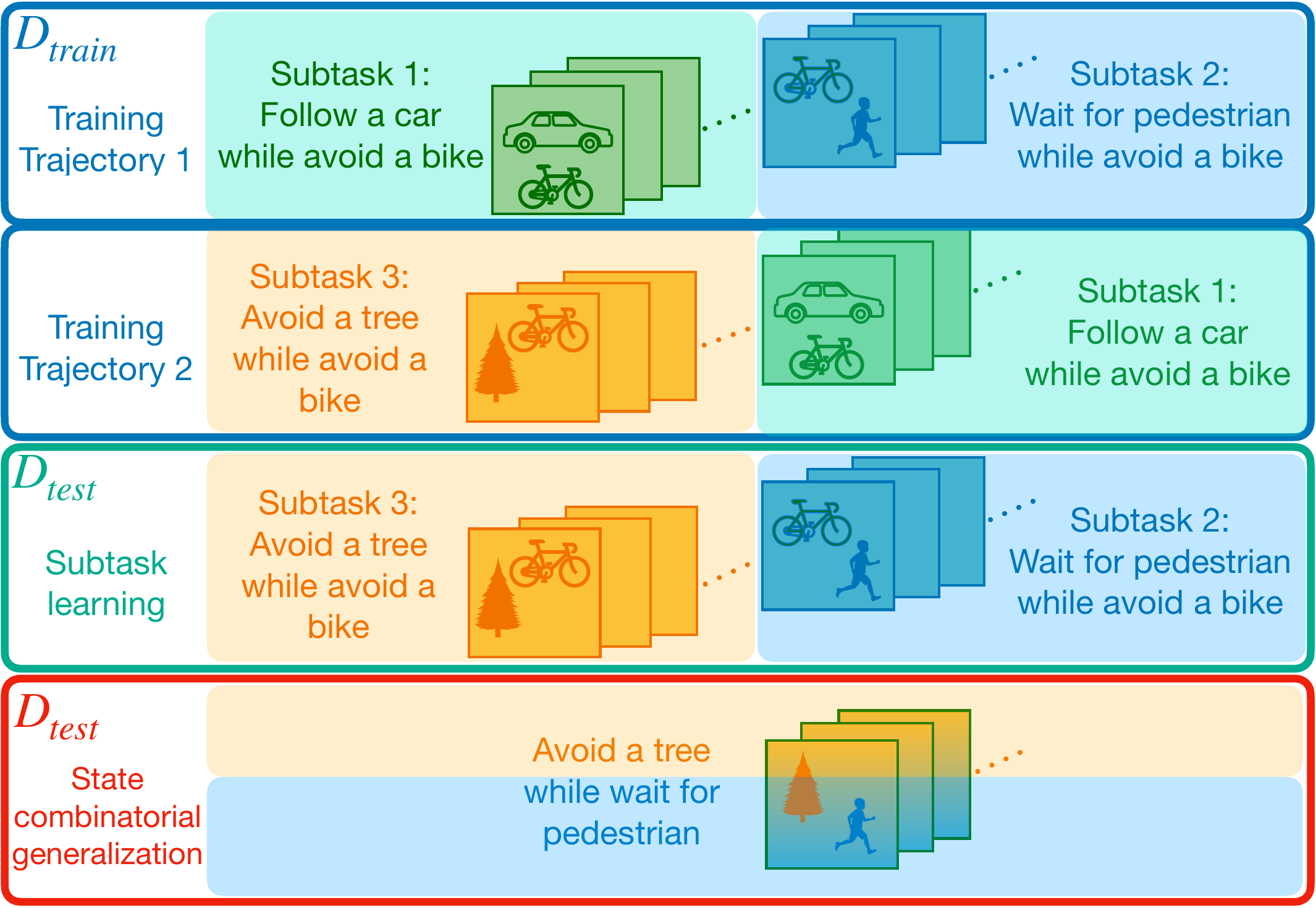}

    \caption{
    \label{fig:traj}
    \footnotesize{Visualization of states in trajectories for training, subtask learning, and state combinatorial generalization. Subtask learning involves stitching together subtask 3 in the training trajectory 2 with subtask 2 in trajectory 1. Combinatorial generalization involves simultaneously avoiding a tree and waiting for a pedestrian. Each of those two elements appeared in the training states but had never been combined.}}

\end{wrapfigure}

\paragraph{Subtask and Hierarchical RL} These two settings focus on learning reusable skills that can be sequenced to complete long horizon tasks \citep{parr1997reinforcement, lin2022efficient, dietterich2000hierarchical, nachum2018data, jothimurugan2023robust, bakirtzis2024reduce}. The concept of compositionally is also a key component in subtask learning, where different sub-trajectories or intermediate goals are composed together to better perform a long horizon task \citep{jothimurugan2023robust, lin2022efficient, bakirtzis2024reduce, mendez2022modular}. We would like to note the difference between compositionally in trajectory stitching and our definition of state composition, where \textit{subtasks in trajectory stitching are often data supported} as shown in Figure \ref{fig:traj}.

\subsection{Combinatorial Generalization}

\paragraph{In Computer Vision}
The closest line of work to ours is combinatorial generalization for image generation where the model needs to learn new combinations of a discrete set of basic concepts like color and shapes and generalize to unseen combinations \citep{wiedemer2024compositional, okawa2024compositional, schott2021visual, hwang2023maganet}. This problem is often approached with disentangled representation learning \citep{liu2023causal, schott2021visual} with models like VAE but little evidence shows they can fully exhibit generalization ability \citep{schott2021visual, montero2020role}. \citet{okawa2024compositional} studied the capabilities of conditioned diffusion models on a synthetic shape generation task and showed that their composition ability emerges with enough training, first to closer concepts, then to farther ones. However, we would like to emphasize the difference between image generation and decision-making tasks, where decision-making in the classic RL setting includes evaluating candidate actions, interacting with unpredictable environments, and handling future state distribution shifts caused by previous action choices and environment \citep{levine2020offline}, different from image generation where it is one step and does not interact with the environment.

\paragraph{In RL}
\citet{song2024compositional} addresses the problem of generalization to unsupported states by decomposing it into the closest state in the training set and their difference, which requires the existence of a distance function to map the unseen state back to data supported region to ensure conservatism. However, we do not assume there exists a distance function between states and we do not explicitly encourage the model to be conservative. \citet{zhao2022toward} uses an object oriented environment to study compositional generalization by learning the world model under the assumption that different combinations have isomorphic MDPs and objects are replaceable with each other. However, we do not assume our MDPs to be isomorphic, as each object in our setup possesses unique attributes that are non-transferable, leading to the emergence of complex underlying modalities. To the best of our knowledge, we are the first to investigate the problem of generalization to unsupported states with novel combinations of basic elements, without relying on mapping unseen states back to data-supported regions.

\subsection{Diffusion Model for Decision making}

Diffusion models emerged as a popular architecture for decision-making tasks and demonstrated superior performance compared to traditional RL algorithms, especially on long-horizon planning tasks \citep{janner2022planning, wang2022diffusion, liang2023adaptdiffuser, mishra2023generative}. Some following work further studied conditioned diffusion models \citep{chi2023diffusion, ajay2022conditional, li2023hierarchical} and demonstrated their ability to stitch trajectories with different skills or constraints together. Application in multi-task environment \citep{he2024diffusion, liang2023skilldiffuser} and meta-learning setting \citep{ni2023metadiffuser, zhang2024metadiff} further demonstrate their ability to capture multi-modality information in the offline dataset.

\section{Problem Formulation}
\label{prob_form}

In this section, we formally define the problem of state combinatorial generalization by providing definitions for state combination and identify out-of-combination generalization as a problem for generalization to different supports in the same sample space.

\subsection{States Formed by Element Combinations}
\label{state_comp}

Following \citet{wiedemer2024compositional}, we first denote $e \in \rmE$ to be a \textit{base element} for an environment. A base element is defined to be the most elementary and identifiable element that is relevant to the decision making task of interest. For example, in a traffic environment, the set $\rmE$ can be the set of vehicles that can occur in the environment such as \{car, bike\}; and in a 2D maze environment, the set $\rmE$ can be the set of possible locations labeled by the $x,y$-axis coordinate of the agent, i.e. $\mathbb{R}^2$. Suppose there are $n$ base elements in an environment. Since these elements are the fundamental components relevant to the decision making task, we can form a \textit{latent vector} $\vz = (z_1,z_2,...,z_n) \in \rmZ \equiv \rmE^n$, where $z_i \in \rmE$ $\forall i \in \{1,\dots,n\}$ that represents the combination of all rudimentary components appearing in this environment related to the decision making task. 

Each element can also be associated with a collection of \textit{attributes} $\vr$ such as the color of the vehicle and the velocity of the agent. Attributes are components that are necessary for rendering the states but not essential for the decision making task. 
The \textit{rendering function} $f(\vz, (\vr_1,\vr_2,\dots,\vr_n))$ then maps the latent and the attributes to a state $\vs \in \rmS$. In the traffic environment example, $f$ is equivalent to reconstructing the cars and the bikes given their colors and positions, etc. All reconstructed base elements collectively determine a state $\vs$. Concretely, we provide the following definition:

\begin{definition}[States and latent vectors]\label{def:1}
For any state $\bm{s}$ with $n$ base elements in state space $\bm{S}$ and rendering function $f$, we have
$
\bm{s} = f(\vz, (\vr_1,\vr_2,\dots,\vr_n))
$
where the corresponding latent vector $\bm{z}$ in latent space $\bm{Z} \equiv \rmE^n$ for $\bm{s}$ is
$
\bm{z} = \big(z_1, z_2, ..., z_n\big) \text{  where } z_i \in \rmE \text{ for } i = 1, ..., n.
$
\end{definition}

With our definition of base elements and states, \textit{the combinatorial property of states naturally follows as the composition of different base elements in the latent space.}

Notice that in practice, for the same environment, one can define different base element sets depending on the desired granularity of the task. %
In addition, since we usually can only obtain observations of the states, in practice we can only extract the empirical latent vector $\Tilde{\vz}$ from the observation.

\subsection{Generalization on Probability Space Support} 
\label{gen_on_sup}

Since we have identified the fundamental elements of the state in the target decision making task, we can formulate the distribution of states with the probability spaces of latent vectors. When our base element set is discrete, finite, and countable, the probability mass function (PMF) $p$ can directly ascribe a probability to a sample in $\rmZ$. Then we can define the corresponding probability space as

\begin{definition}[Probability space for discrete latents]\label{def:2}
Define the sample space $\rmZ$ as the set of all possible $\bm{z}$. $\sigma$-algebra $\Sigma = 2^\rmZ$ is the power set of $\rmZ$. $p: \rmZ \rightarrow [0, 1]$ such that $\sum_{\vz \in \rmZ} p(\vz) = 1$ is the PMF. Then the probability space over the latent vector $\mathbf{z}$ can be defined as $P = (\rmZ, \Sigma, p).$
\end{definition}

When $\rmZ$ is a continuous space, we can also have the correspnding definitions.

\begin{definition}[Probability space for continuous latents]\label{def:3}
Define the sample space $\rmZ$ as the set of all possible $\bm{z}$. $\sigma$-algebra $\Sigma = \mathcal{B}(\rmZ)$ is the Borel set of $\rmZ$. $p: \rmZ \rightarrow [0, 1]$ such that $\int_{\vz \in \rmZ} p(\vz) d\vz = 1$ is the probability dense function (PDF). Then the probability space over the latent vector $\mathbf{z}$ can be defined as $P = (\rmZ, \Sigma, p)$. 
\end{definition}
The support of $P = (\rmZ, \Sigma, p)$ can then be defined as
$
    \supp P := \{ \vz \in \rmZ: p(\vz) > 0\}. 
$

State combinatorial generalization, or OOC generalization, is then defined as generalizing to latent probability space with a different support. Denote the latent probability space of training states as $P_{train} = (\rmZ, \Sigma_{train}, p_{train})$ and testing states as $P_{test} = (\rmZ, \Sigma_{test}, p_{test})$, then combinatorial generalization assumes $\supp\{P_{train}\} \neq \supp\{P_{test}\}$. That is to say, combinatorial
generalization in state space requires generalizing to a distribution of latent vectors with different, and possibly non-overlapping support \citep{wiedemer2024compositional}. Whereas traditional distribution shift in RL normally assumes different PMF or PDF ($p_{train} \neq p_{test}$), as shown in Figure \ref{fig:ooc}.

\subsection{Constraint for OOC Generalization}
\label{constraint}

One crucial assumption made by OOC generalization is that all base elements are seen at training time. Recall the latent vector $
\bm{z} = \big(z_1, z_2, ..., z_n\big) \text{  where } z_i \in \rmE \text{ for } i = 1, ..., n$ represent the appearing base elements for a given state. This indicates that the marginal distribution $ p(z_i) > 0$ for all $z_i$ at training time, or equivalently the training probability space has full support over the marginals. For discrete latent spaces, this also implies that every base element that appeared in the training state space would appear at least once in one latent feature $\bm{z}$. To ensure full support of base elements, the union of marginal supports at test time should be a subset of that at training time. Finally, to test generalizability, we assume $\supp\{P_{train}\} \subsetneqq \rmZ$, i.e. the training probability space doesn't have full support on the entire latent space.

\begin{constraint}[Combinatorial support]\label{cons:1}
Given probability spaces $P = (\rmZ, \Sigma_{P}, p)$ and $Q = (\rmZ, \Sigma_{Q}, q)$ over latent vector $\bm{z} = \big(z_1, z_2, ..., z_n\big) \in \rmZ$ where $ z_i \in \rmE \text{ for } i = 1, ..., n $, $P$ has full combinatorial support for $Q$ if:
$
    \bigcup_{i=0}^n \{ z_i \in \rmE: q(z_i) > 0\} \subseteq \bigcup_{i=0}^n \{ z_i \in \rmE: p(z_i) > 0\} 
$.
\end{constraint}

\section{Why traditional RL fails}

\label{why_rl_fail}
Most RL algorithms include estimating the expected cumulative reward of choosing a specific action given the current state \citep{schulman2017proximal, kumar2020conservative, haarnoja2018soft}. We demonstrate the estimation of value functions is problematic given unsupported states and this can not be solved by more exploration or more training data in this section.

\subsection{RL and expected reward estimation}

Most deep RL algorithms rely on learning a Q or Value function, which takes in the current state as network input and predicts the expected future reward \citep{schulman2017proximal, haarnoja2018soft}. Since states with unseen composition are unsupported and fall within the undertrained regions of the neural network, the value prediction is highly unreliable. This affects both value-based methods that directly choose the maximum action with erroneous Q value and policy-based methods that update the actor with an erroneous value prediction. We plot the expected Q-values learned by CQL alongside the actual return-to-go in both failed and success scenarios in Roundabout environment \citep{highway-env} (Section \ref{single_agent_exp}) when presented with OOC states in Figure \ref{fig:q_rtg}. The grey dashed line is the expected Q-values the model predicts for in-distribution (ID) states. When comparing the Q-value predicted on in-distribution and OOC states:

\begin{wrapfigure}[18]{r}{0.45\linewidth}
    \vspace{-4mm}
    \centering
    \includegraphics[width=0.83\linewidth]{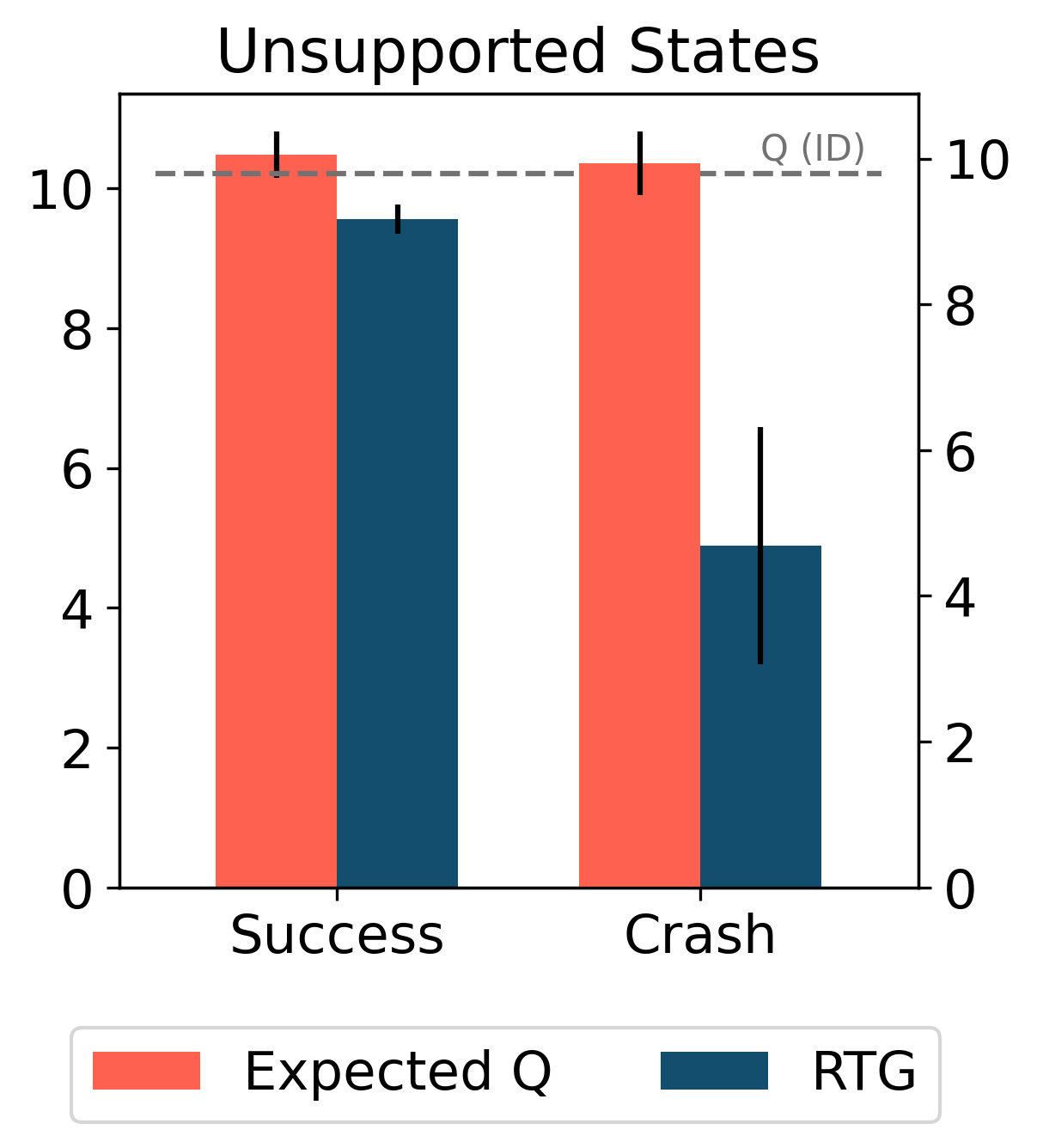}

    \caption{
    \label{fig:q_rtg}
    \footnotesize{Expected Q value predicted by CQL and in OOC and ID states actual return-to-go (RTG) in OOC states in Roundabout environment.}}

\end{wrapfigure}

One observation about inaccuracies of value prediction can be made: \textit{Q function shows signs of memorizing, which assigns similar Q values for both ID and OOC states. }

The problem of distribution shift is even more challenging for offline RL, where suboptimal action prediction can cause future states to deviate away from the training set and lead to compounding errors \citep{levine2020offline}. This problem can often be mitigated with a wider training state distribution under the assumption that test time states are sampled from a distribution with different probability density but same support. \textit{However, since new states with different object combinations are out of support of the training environment, neither using a more exploratory policy nor collecting more training trajectories for offline RL can reach these OOC states, and thus the issue cannot be fundamentally resolved}. We need a policy with better generalization to unsupported states to achieve zero-shot generalization in this problem.

\section{Why Diffusion Models Generalize Better}

\label{why_diffusion_works}
In this section, we first introduce diffusion model notations and then provide a proof sketch of why diffusion models can generalize to OOC states. 

\subsection{Diffusion Models}

Diffusion models are among the most popular methods for density estimation. 
~\citet{ho2020denoising} proposed DDPM to model the data generation process with a forward and reverse process. In the forward process, noise is added to corrupt data $\vx_t$ iteratively for $T$ timesteps towards a standard Gaussian distribution.
The target of diffusion modeling is to learn the reverse process $p_\theta(\vx_{t-1}|\vx_t):= \mathcal{N} (\vx_{t-1}; \bm{\mu_\theta}(\vx_t, t), \bm{\Sigma_\theta}(\vx_t, t))$. This way, we can sample from the data distribution by first obtaining a Gaussian noise $\vx_t$ and then iteratively sampling from $p_\theta(\vx_{t-1}|\vx_t)$. With reparametrization trick, we can train a model $\bm{\epsilon_\theta}$ to predict the noise $\bm{\epsilon}$ at each timestep $t$, and gradually denoise using update rule
$
    \vx_{t-1} = \frac{1}{\sqrt{\alpha_t}} \left(\vx_t - \frac{1-\alpha_t}{\sqrt{1-\bar{\alpha}_t}-\sigma_t^2} \epsilon_\theta(\vx_t, t)\right) + \sigma_t \bm{\epsilon_t}, \text{ where } \bm{\epsilon_t}\sim \mathcal{N}(0,I)
$
with variance schedulers $\alpha_t, \bar{\alpha}_t$. Given the same pretained diffusion model, one can also perform DDIM sampling~\citep{song2020denoising} $\vx_{t-1} = \sqrt{\alpha_{t-1}} \left( \frac{\vx_t - \sqrt{1-\alpha_t} \bm{\epsilon_\theta}(\vx_t, t)}{\sqrt{\alpha_t}} \right) + \sqrt{1 - \alpha_{t-1}} \epsilon_\theta(\vx_t, t) + \sigma_t \bm{\epsilon_t}$ to enable fast sampling.

\citet{song2020score} formally established the connection between diffusion models and score-based stochastic differential equations (SDE). Interestingly, they discovered that each diffusion process has a corresponding probability flow ODE that shares the same intermediate marginal distributions $p(\vx_t,t)$ for all $t$. The transformation between probability flow ODE and SDE can be easily achieved by adjusting the random noise hyperparameter $\sigma$ in DDIM sampling.

\subsection{OOC Generalization in Diffusion Models}

We now give a theoretical justification for OOC generalization by showing that a well-trained diffusion model concentrates probability mass on in-combination states (seen and OOC), with a quantifiable density lower bound. 

Since states are formed by \emph{combinations of base elements} (Definition \ref{def:1}), with a well-constructed and encoded $\rmZ$, we assume that the set of in-combination states (both supported and OOC states) concentrates on a lower-dimensional manifold $\mathcal{M}$ embedded in the high dimensional ambient state space. 
As a result, we first adopt a \textbf{linear manifold assumption (Assumption \ref{assumption:linear_manifold})}, where $\mathcal{M}$ is the linear manifold that the in-combination states lie along.

To connect this formulation to the diffusion sampling process, we assume we have access to a \textbf{well-trained diffusion model (Assumption \ref{assumption:well_trained_diffusion})} that: (i) contract the components that are
orthogonal to the manifold and pull the samples towards the manifold (block-wise bi-Lipschitz with contraction), and (ii) the learned reserve mean should not mix  the manifold orthogonal and on-manifold components (block preserving denoising). With above assumptions, we present the following theorem that quantifies the lower bound of the probability density assigned to a state:  

\begin{theorem}
\label{coro1}

Under the linear manifold assumption (Assumption~\ref{assumption:linear_manifold}), and let $p_\theta$ be a DDPM diffusion using parametrization from~\Eqref{eq:diffusion} that is well trained (Assumption~\ref{assumption:well_trained_diffusion}), with some constant $C$, for all $\vs \in S$
    \begin{equation}
        p_\theta(\vs) \geq C\exp{\left(-\frac{\Vert W_\gM(\vs-\mu_0)\Vert ^2}{2(\sigma_0^\gM)^2}-\frac{\Vert W_\perp\vs\Vert^2}{2(\sigma_0^\perp)^2}\right)}
    \label{equa:bound}
    \end{equation}
    
where $0 < A_t^\gM \leq B_t^\gM < \infty$, $0 < A_t^\perp \leq B_t^\perp < 1$ are the block-wise bi-Lipschitz constants for the on-manifold component and the orthogonal component respectively, $W_{\mathcal{M}}$ and $W_{\perp}$ are projection operators onto the manifold and its orthogonal complement respectively, $\sigma_0^{\mathcal{M}}$ and $\sigma_0^{\perp}$ are the effective variances along and orthogonal to the manifold, $\mu_0$ is the learned noise-free center, and  $C = (2\pi)^{-d/2}\left(\prod_{t=1}^T(B_t^{\mathcal{M}}/A_t^{\mathcal{M}})^{-m}(B_t^{\perp}/A_t^{\perp})^{m-d}\right)(\sigma_0^{\mathcal{M}})^{-m}(\sigma_0^{\perp})^{m-d}$.

\end{theorem}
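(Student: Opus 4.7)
The plan is to express $p_\theta(\vs)$ through the reverse-time dynamics of the DDPM, exploit the block structure provided by Assumption~\ref{assumption:well_trained_diffusion} to decouple the on-manifold and orthogonal directions, and then apply the bi-Lipschitz bounds block-by-block to convert a Gaussian at $t=T$ into the claimed Gaussian lower bound at $t=0$. Because the paper already links DDPM to its probability flow ODE via the $\sigma_t$ hyperparameter, the cleanest route is to work with the deterministic flow: then $p_\theta(\vs_0) = p(\vs_T(\vs_0))\,\lvert\det J_{T\to 0}(\vs_0)\rvert^{-1}$, where $\vs_T(\vs_0)$ is the image of $\vs_0$ under the forward map and $J_{T\to 0}$ is the Jacobian of the composition of the learned reverse steps.

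First, I would invoke the block-preserving denoising part of Assumption~\ref{assumption:well_trained_diffusion} to show that each one-step reverse map factors as $\vs_{t-1} = W_{\mathcal{M}} g_t^{\mathcal{M}}(W_{\mathcal{M}}\vs_t) + W_{\perp} g_t^{\perp}(W_{\perp}\vs_t)$ for some functions $g_t^{\mathcal{M}}, g_t^{\perp}$. This property is preserved under composition, so the full reverse map inherits a block-diagonal Jacobian $J_{T\to 0} = W_{\mathcal{M}} J^{\mathcal{M}} W_{\mathcal{M}}^{\top} + W_{\perp} J^{\perp} W_{\perp}^{\top}$, giving $\lvert\det J_{T\to 0}\rvert = \lvert\det J^{\mathcal{M}}\rvert \cdot \lvert\det J^{\perp}\rvert$.

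Next, I would use the block-wise bi-Lipschitz constants. On the on-manifold block (dimension $m$), chaining the per-step bounds gives $\lvert\det J^{\mathcal{M}}\rvert \leq \prod_{t=1}^T (B_t^{\mathcal{M}})^{m}$, and similarly on the $(d-m)$-dimensional orthogonal block $\lvert\det J^{\perp}\rvert \leq \prod_{t=1}^T (B_t^{\perp})^{d-m}$; the lower bounds use $A_t^{\mathcal{M}}, A_t^{\perp}$. These compound into exactly the $\prod_t (B_t^{\mathcal{M}}/A_t^{\mathcal{M}})^{-m}(B_t^{\perp}/A_t^{\perp})^{m-d}$ factor appearing in $C$, and the $(2\pi)^{-d/2}$ piece comes from the Gaussian at $t=T$. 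The contraction $B_t^{\perp}<1$ also ensures the orthogonal variance at $t=0$ shrinks to an effective $\sigma_0^{\perp}$, while the on-manifold block gives $\sigma_0^{\mathcal{M}}$ around the learned center $\mu_0$; substituting these into $p(\vs_T)$ and tracking the preimage through the block-diagonal map produces the two Gaussian factors in the exponent of~\eqref{equa:bound}.

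The hardest step, and the one I would spend the most care on, is the propagation of the block decomposition through $T$ compositions while simultaneously keeping a uniform lower bound on the exponent. The bi-Lipschitz bounds give a two-sided control on each Jacobian block, but aligning the variance accumulation with the contraction factors so that the final expression collapses into a single pair $(\sigma_0^{\mathcal{M}}, \sigma_0^{\perp})$ requires proving that the on-manifold and orthogonal noise variances evolve independently under the assumed block structure. One needs, in particular, to check that the pre-image $\vs_T(\vs_0)$ satisfies $\Vert W_{\mathcal{M}}(\vs_T - \mu_T)\Vert \leq \prod_t B_t^{\mathcal{M}} \Vert W_{\mathcal{M}}(\vs_0-\mu_0)\Vert / \prod_t A_t^{\mathcal{M}}$ and an analogous orthogonal inequality, so that the exponent of the Gaussian at $t=T$ becomes a lower bound on the stated exponent at $t=0$ after redefining $\sigma_0^{\mathcal{M}}, \sigma_0^{\perp}$. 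Once this compounding is set up carefully, the rest of the argument is bookkeeping; the conclusion then applies uniformly to all $\vs \in S$, including OOC states, because the bound depends only on how far $\vs$ lies from $\mu_0$ along and off the manifold, not on whether $\vs$ was in the training support.
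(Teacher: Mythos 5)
Your overall skeleton (change of variables, block-diagonal Jacobian from block preservation, per-block determinant bounds from the bi-Lipschitz constants) matches the core computation in the paper's Lemma~\ref{lemma:gaussian_bound}. However, there is a genuine gap in how you set up the global argument: you replace the stochastic DDPM reverse process of \Eqref{eq:diffusion} with its deterministic probability flow, and then compute $p_\theta(\vs_0)$ by a single change of variables through the composed map. The theorem is a statement about the DDPM sampler $\vs_{k-1} = \bm{\mu_\theta}(\vs_k,k) + \sigma_k\epsilon$, whose marginal at $t=0$ is a \emph{convolution} of the pushforward density with a Gaussian at every step, not a pushforward alone. The equivalence of SDE and probability-flow-ODE marginals holds for the exact score, and nothing in Assumption~\ref{assumption:well_trained_diffusion} guarantees it for a learned $\bm{\mu_\theta}$ that is merely block-wise bi-Lipschitz; so the quantity you bound is not the $p_\theta(\vs)$ in the statement. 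Concretely, the effective variances in the theorem obey the recursion $(\sigma_{k-1}^\gM)^2 = \sigma_k^2 + (A_k^\gM)^2(\sigma_k^\gM)^2$ (and analogously for the orthogonal block): the additive $\sigma_k^2$ term comes precisely from the injected noise at each reverse step. A noiseless flow cannot produce these constants, so your bookkeeping cannot ``collapse into'' the stated $(\sigma_0^\gM,\sigma_0^\perp)$ without silently redefining them.

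The paper instead argues by induction over timesteps: at each step it first applies the density-transform lemma to the map $\bm{\mu_\theta}(\cdot,k)$ (this is where your Jacobian and preimage estimates live, and where the $\prod_t(B_t/A_t)$ factors in $C$ accumulate), and then observes that adding the Gaussian noise $\sigma_k\epsilon$ inflates both block variances by $\sigma_k^2$ while preserving the Gaussian-type lower bound. That second half of the inductive step — that convolving a density admitting a block-Gaussian lower bound with an isotropic Gaussian preserves such a lower bound with enlarged variances — is entirely absent from your proposal and is not a routine omission; it is what makes the per-step interleaving necessary rather than a one-shot composition. A secondary slip: from the lower Lipschitz bounds, the preimage satisfies $\Vert W_\gM(\vs_T-\mu_T)\Vert \leq \Vert W_\gM(\vs_0-\mu_0)\Vert/\prod_t A_t^\gM$; the extra factor $\prod_t B_t^\gM$ in your stated inequality is spurious, though the $B$'s do correctly enter through the Jacobian determinant bound. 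To repair the proof you would need to abandon the deterministic-flow reduction and run the argument step by step on the stochastic process, as the paper does.
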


\textbf{Proof sketch:} The proof proceeds by induction over diffusion timesteps $t = T, \ldots, 0$. We begin with the base case at $t=T$ where $\mathbf{s}_T \sim \mathcal{N}(0,I)$, and establish the initial Gaussian density lower bound that factorizes along the manifold and its orthogonal complement via the projectors $(W_{\mathcal{M}},W_\perp)$. For the inductive step from timestep $k$ to $k-1$, we decompose the DDPM reverse process $\mathbf{s}_{k-1} = \bm{\mu}_\theta(\mathbf{s}_k, k) + \sigma_k\epsilon$ into two operations: (1) applying the learned denoising mean $\bm{\mu}_\theta(\cdot, k)$, and (2) adding Gaussian noise $\sigma_k\epsilon$. For operation (1), we prove with Lemma~\ref{lemma:gaussian_bound}, that the bi-Lipschitz property ensures that applying $\bm{\mu}_\theta$ transforms the density bound with updated means and scales the variances. Moreover, the block-preserving property ensures that on-manifold and orthogonal components do not mix, maintaining the factorization. For operation (2), we can observe that adding the aforementioned Gaussian noise simply inflates along and orthogonal manifold components' variances by $\sigma_k^2$. Combining (1) and (2) closes the induction and gives, at $t=0$, a block Gaussian
lower bound centered at $\mu_0$ (Equation \ref{equa:bound}).
We provide the full statement and the formal proof in Appendix~\ref{app_proof}.

Intuitively, this theorem suggests that, the lower bound of the density assigned at a certain point by a well-trained diffusion model depends on how far it is along the manifold from the model’s noise-free center and how far it is off the manifold. Because of the contraction towards the manifold, off-manifold components incurs harsh penalty under the this well-train model, with likelihood dropping rapidly as the off-manifold component grows.

\section{Conditioned Planning with Diffusion}

The theoretical analysis in the previous section provides promising signals for OOC generalization in diffusion models. However, it also suggests that reliable generalization requires the linear manifold assumption and a well-trained diffusion model that can capture the characteristics of the manifold. While in environments like a 2D maze, the linear manifold assumption can be naturally satisfied (as 2D planes through the origin are linear subspaces), for more complex environments, this assumption does not necessarily hold true without a powerful encoder that maps the element set to a proper embedding space. Moreover, the block-wise bi-Lipchitz with contraction and block preserving denoising assumption indicates that the diffusion model needs to be manifold (i.e., element combination) aware, which is also not guaranteed with vanilla unconditional diffusion models. As a result, careful model architectural designs are required to ensure empirical performance. 

We take inspiration from recent computer vision research, which provides evidence of the combinatorial generalization capabilities of conditioned diffusion models in more complicated data spaces: \citet{aithal2024understanding} identifies the phenomena where diffusion models generate samples out of the support of training distribution through interpolating different complex modes on a data manifold. \citet{kadkhodaie2023generalization} demonstrate generalization to unsupported data by showing two diffusion models trained on large enough non-overlapping data converge to the same denoising function. \citet{okawa2024compositional} showed that given different concepts like shape, color, and size in synthetic shape generation, conditional diffusion models demonstrate a multiplicative emergence of combinatorial abilities where it will first learn how to generalize to concepts closer to the training samples (i.e. only change one of color, shape, and size) and eventually adopt full compositional generalization ability with enough training. 

Therefore, we propose to use a conditioned diffusion model with learnable element embeddings to tackle the OOC state problem for decision-making tasks.
Following the trajectory-based planning formulation of \citet{janner2022planning}, we train a conditional diffusion model to denoise (predict) future state-action pairs conditioned on the current observation. Importantly, this training is carried out without reward guidance; instead, it is trained via behavior cloning on successful demonstrations collected by a PPO~\citep{schulman2017proximal} model in the in-distribution environment, capturing the relationship between states, successful actions, and future states. Conditioning is applied in two complementary ways. First, we enforce consistency of the generated trajectory with the current observation by replacing the corresponding component of the predicted trajectory with the actual environment observation at every denoising step. Second, to enhance this modeling, we incorporate a cross-attention layer after each residual block in the Diffusion UNet, allowing the predicted trajectory to directly attend to the latent vector $\bm{z}$ contained in the observation of the current timestep. Consequently, when conditioned on novel combinations of base elements, the generated trajectories remain dynamically consistent with the underlying dynamics (MDP), facilitating robust generalization to novel scenarios. For deployment, the first action in the generated trajectory is then used to step the environment (Appendix \ref{app_traj_for}). Detailed architecture of the model can be found in Appendix \ref{app_archi} and the planning process is described in Algorithm \ref{algo1}. 

Note that in practice, our conditioned diffusion model and the element embeddings are trained end-to-end, and thus the resulting learned model does not necessarily fit the theoretical assumptions above. However, as we will show in the next section, empirically this model demonstrates successful OOC generalization in various RL environments.

\begin{algorithm}[ht]
\caption{Planning with Attention-based Composition Conditioned Diffusion Model}\label{alg:cap}
\begin{algorithmic}
\State \textbf{Input:} Diffusion model $\bm{\epsilon_\theta}$, compositional elements extractor $r$, learnable embedding function $h$, classifier-free guidance scale $\lambda$, state dimentionality $d_S$, initial observation $\bm{o}$, environment simulator $env$
\While{not done}
\State Initialize $\vs_{t} \sim \mathcal{N}(0, I)$
\State $\bm{c} \gets r(\bm{o})$ \Comment{Extract observed compositional information}
\State $\bm{z} \gets h(\bm{c})$ \Comment{Obtain element embedding}
\For{$t\gets T, ... 1$} 
\State $\vs_{t}[:d_S,0] \gets \bm{o} $
\Comment{(Cond state) Replace the first denoised \\ \hspace{282pt} state with observed $\bm{o}$}
\State $\bm{\widetilde{\epsilon_t}} = (1+\lambda) \bm{\epsilon_\theta}(\vs_{t}, \bm{z}, t) - \lambda \bm{\epsilon_\theta}(\vs_{t}, t)$ \Comment{(Cond z) Classifier free guidance}
\State $\bm{x_{t-1}} = \frac{1}{\sqrt{\alpha_t}} \left(\vs_{t} - \frac{1-\alpha_t}{\sqrt{1-\bar{\alpha}_t}}  \bm{\widetilde{\epsilon_t}}\right) + \sigma_t \bm{\epsilon_t}, \text{ where } \bm{\epsilon_t}\sim \mathcal{N}(0,I)$
\EndFor
\State $\bm{a} \gets \bm{x_0}[d_S:,0]$ \Comment{Extract action}
\State $\bm{o} \gets env.step(a)$
\EndWhile
\end{algorithmic}
\label{algo1}
\end{algorithm}

\section{Experiments}
\label{experiments}

The primary goal of our experiments is to answer the following questions: (1) (Wide applicability) Does the state-space of different existing RL environments exhibit a compositional nature? (2) (Advantages) What are some interesting features conditional diffusion models have that contribute to their performance when generalizing to OOC states? (3) (Conditioning) Does conditioning help with OOC generalization?

\subsection{Single-agent Environment}

\label{single_agent_exp}
\paragraph{Environment} HighwayEnv \citep{highway-env} is a self-driving environment where the agent needs to control a vehicle to navigate between traffic controlled by predefined rules. We specifically look at the Roundabout environment with two types of traffic: cars and bicycles (Visualization in Appendix \ref{app_round_env}).

State in this environment is a composition of four environment vehicles that are either cars or bicycles and the ego agent, which is always a car. Environment observation contains observability, the locations and speed of the ego and surrounding agent, and \textit{whether this agent is a car or a bike (Conditioning)}. During training time, the environment will only generate traffic of all cars or all bicycles with equal probability. During test time, environments will generate a mixture of cars and bicycles (detailed setup in Appendix~\ref{app:roundabout_setup}). Cars and bicycles have different sizes, max speeds, and accelerations, leading to different behavior patterns. This is an instance of generalizing to OOC states with non-overlapping support.

\begin{wrapfigure}[15]{r}{0.5\linewidth}
    \vspace{-7mm}
    \centering
    \includegraphics[width=0.7\linewidth]{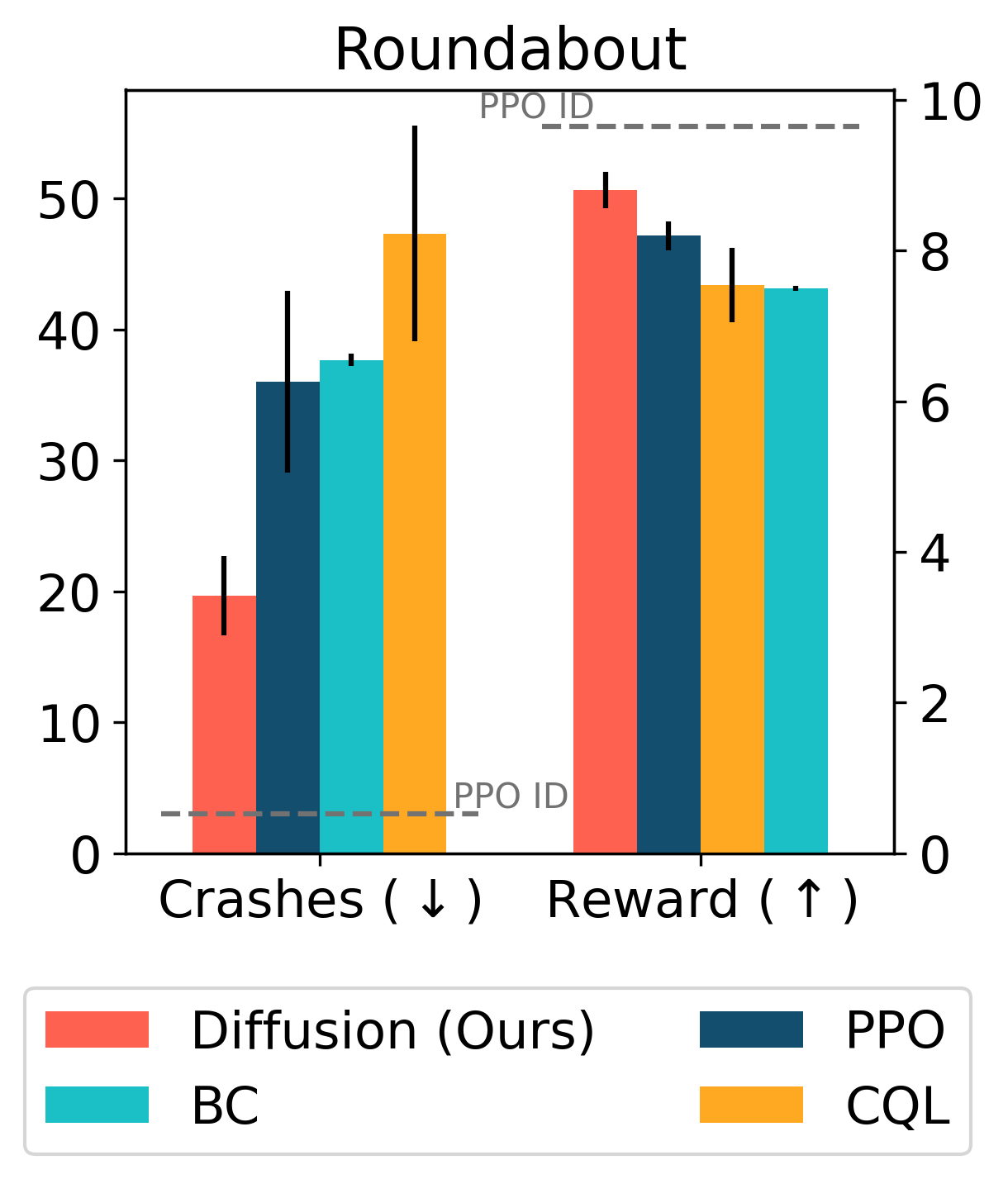}

    \caption{
    \label{fig:round_res}
    \footnotesize{Total number of crashes and average reward for BC(MLP), PPO, CQL, and diffusion model in the testing environment.}}

\end{wrapfigure}

\paragraph{Results} As shown in Figure \ref{fig:round_res}, the conditional diffusion model has almost half the number of crashes and higher reward when zero-shot generalizing to states with mixture traffic. Results of models with different sizes are shown in Table \ref{size:round} to eliminate the concern of performance gain due to larger model sizes. Since we train the diffusion model exclusively on successful PPO trajectories, the training state distribution for diffusion is much narrower compared to that of other online methods. This is particularly interesting since it is widely acknowledged that online models have better generalization compared to offline models \citep{levine2020offline}. 

\vspace{4mm}

\begin{tcolorbox}[colback=blue!6!white,colframe=black,boxsep=0pt,left=2.5pt,right=2.5pt,top=2.5pt,bottom=2.5pt]
    \footnotesize{\textbf{Takeaway 1}: Conditional diffusion models, trained on an offline dataset with narrow state distribution with full combinatorial generalization support, have better zero-shot generalization performance to OOC states compared to online RL trained in the same environment.}
\end{tcolorbox}

\subsection{Multi-agent Environment}
\label{sc_exp}

\paragraph{Environment} The StarCraft Multi-Agent Challenge (SMAC/SMACv2) \citep{samvelyan2019starcraft, ellis2022smacv2} is a multi-agent collaborative game that takes several learning agents, each controlling a single army unit, to defeat the enemy units controlled by the built-in heuristic AI. This benchmark is particularly challenging for its diverse army unit behaviors and complex team combinations, which enable diverse strategies like focus fire and kiting enemies to emerge \citep{ellis2024smacv2}. Each agent's observation includes health, shield, position, \textit{unit type (Conditioning)} of its own, visible teammates, and enemies. 

We treat one agent as the ego agent, and consider its teammates and enemies as part of the environment. Then states can be naturally seen as compositions of the unit types in a particular playthrough. \textit{We expect the ego agent to generate different policies when playing with or against different types of units, and we aim to test OOC generalization by changing the unit composition in the environment.} Since we use a MAPPO \citep{yu2022surprising} for data collection, we report the performance gain/loss compared to MAPPO as shown in Figure \ref{fig:starcraft_main_results}. To treat the teammates and enemies of one particular agent as environment and change their combination, we control one unit with a conditional diffusion model and let MAPPO control the rest of its teammates. 

\paragraph{Setup}
The unit types in this experiment are Protoss.Stalker, Protoss.Zealot, and Protoss.Colossus, referred to as $a, b, c$ respectively. We evaluate on two OOC scenarios: (1) \textit{(Simple: Different but overlapping support)}: Train the model on randomly generated combinations ($ABC$) of all units and test it where all the units on the team have same type $(AAA)$, (2) \textit{(Hard: Non-overlapping support)}: the opposite scenario, where we train on teams with only one unit type ($AAA$), but during test-time we see any composition of these three units ($ABC$). More information about our setup could be found in Appendix \ref{app_sc}.

\begin{figure}

    \centering
    \includegraphics[width=0.98\textwidth]{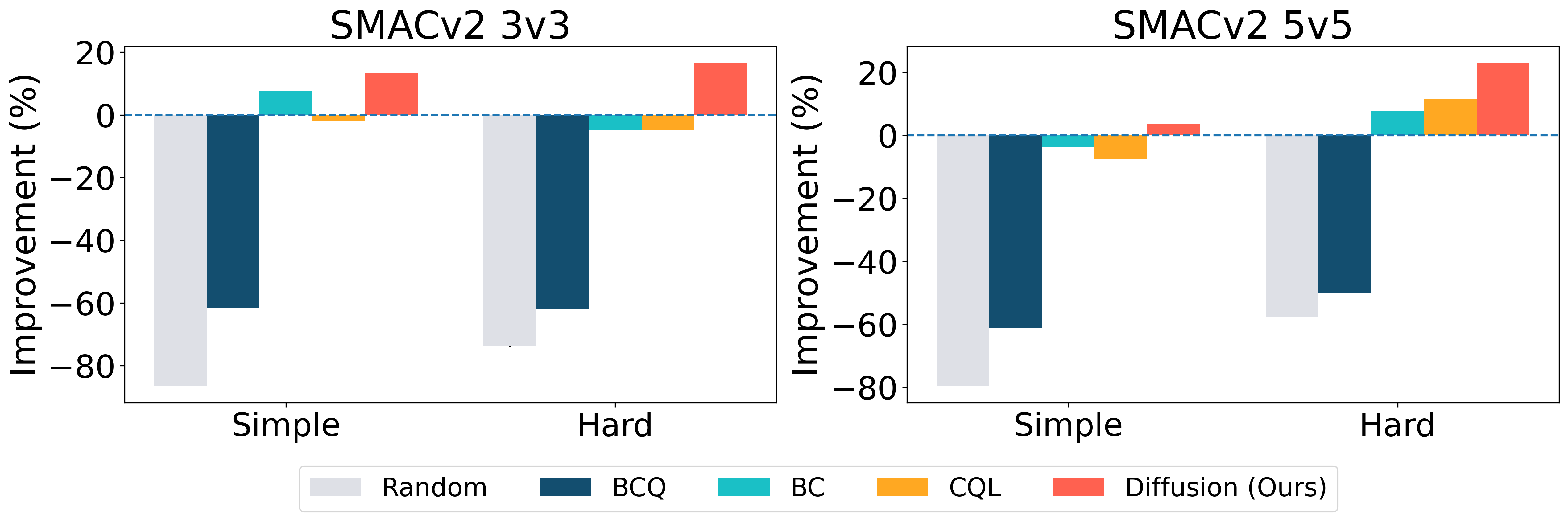}

    \caption{\footnotesize{Relative improvement \% compared to MAPPO on two SMACv2 scenarios: 3v3 and 5v5. Conditional Diffusion shows large and consistent improvements over the MAPPO and other BC and RL baselines, especially in the hard scenario, where we train on teams with the same unit type only but test on random team compositions.}}
    \label{fig:starcraft_main_results}
\end{figure}

\paragraph{Results} As shown in Table \ref{tab:3v3sr} and Table \ref{tab:5v5sr}, MAPPO performance drastically dropped in the hard OOC scenario by $55.2\%$ for 5v5 and $33.3\%$ for 3v3. If we substitute one agent generated by MAPPO with conditional diffusion, the success rate can be improved by $16.7\%$ for 3v3 and $23.1\%$ for 5v5 in hard OOC scenario as shown in Figure~\ref{fig:starcraft_main_results}. 

\begin{tcolorbox}[colback=blue!6!white,colframe=black,boxsep=0pt,left=2.5pt,right=2.5pt,top=2pt,bottom=2pt]
    \footnotesize{\textbf{Takeaway 2}: Multi-agent RL, viewed from the perspective of a single ego agent, naturally requires combinatorial generalization to collaborate/compete with different agent types.}
\end{tcolorbox}

\subsection{How Do Conditional Diffusion Models Generalize to OOC States?}
\label{sc_rendering}

To see how diffusion models generalize to OOC states, we render the states predicted by the diffusion models given different conditionings with the same current state, as shown below in Figure \ref{fig:diff_rend}.

\begin{figure}[ht]

    \centering
    \includegraphics[width=0.98\textwidth]{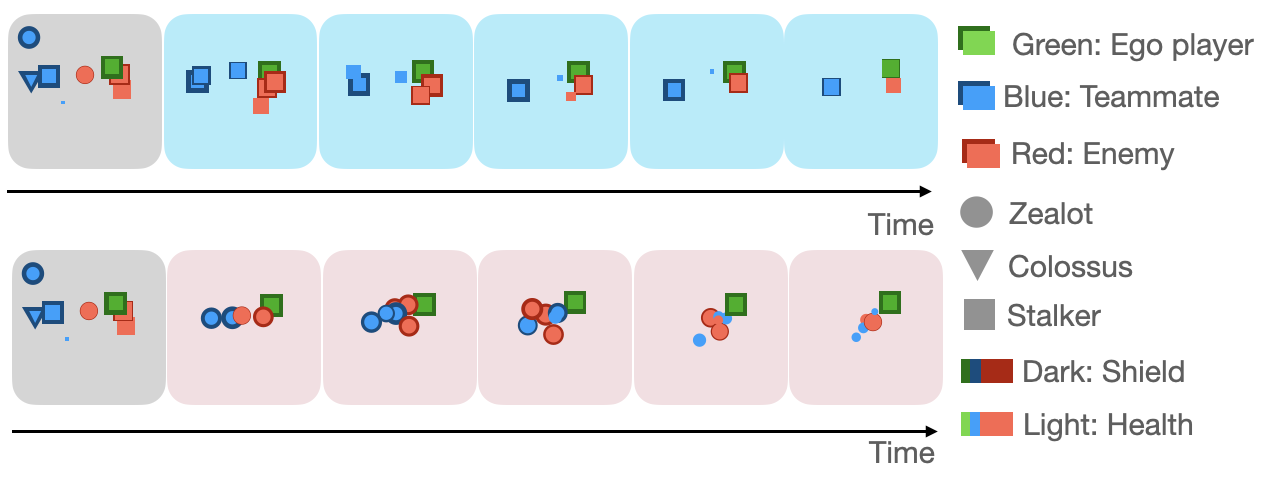}
    \caption{\footnotesize{Rendering of future states predicted by the diffusion model given different conditionings. The grey box is the current state. Blue backgrounds are conditional on all Squares (long attack range) and pink backgrounds are conditioned on all circles (short attack range). Smaller sizes represent less shield and health. More examples shown in Appendix \ref{app_render}.}}

    \label{fig:diff_rend}
\end{figure}

We can see that \textit{player type conditionings determine the unit type of agent predicted by the diffusion model and also their behavior pattern. Whereas the current state determines other attributes like initial location and health.} Different player type conditionings will lead to different strategies. The circle unit has attack range 1 and the square unit has attack range 6. For units with short attack ranges, the optimal strategy is to approach their enemies before initiating an attack. Conversely, agents with large attack ranges are advised to attack their enemies from a distance to ensure their own safety. Figure \ref{fig:diff_rend} shows that if we condition on all circles, the diffusion model thinks players will form a cluster and if condition on all squares, it will predict the players to attack each other from a distance, aligning well with the optimal policy. This demonstrates conditioned diffusion models' ability to \textit{implicitly decompose states to learn underlying compositions} and \textit{capture multimodality of different unit behavior} in the training data. It also demonstrates its ability to perform state stitching to accurately predict the underlying MDP for an unseen combination. 

\begin{tcolorbox}[colback=blue!6!white,colframe=black,boxsep=0pt,left=2.5pt,right=2.5pt,top=2pt,bottom=2pt]
    \footnotesize{\textbf{Takeaway 3}: Conditioned diffusion models show significant promise by effectively decomposing and capturing modes of individual base elements and performing state stitching, which helps them to generalize to OOC scenarios.}
\end{tcolorbox}

\section{Ablations}
In this section, we ablate over our design choices to (1) show the necessity of using the inductive bias of the latent vector as conditioning, 
(2) different model architectures to incorporate conditioning information

\subsection{Necessity of Combinatorial Inductive Bias}

We compare trajectories generated by the conditioned and unconditioned diffusion models in this section to demonstrate the importance of using the latent vector $\bm{z}$ that contains the combinatorial latent information as conditioning. In Maze2D~\citep{fu2020d4rl}, we formulate the navigation problem as a one-step generation process where the diffusion model learns how to generate an entire valid trajectory without rolling out the current action and replanning. Since there is only one planning step in this process, the generated trajectory can be seen as the ``state'' in this setting, where unseen trajectories correspond to unseen states instead of time-horizon trajectory stitching. The inductive bias we use is every training trajectory will pass through three waypoints that equally slice the trajectory. In this case, the set of all waypoints forms the base element set and their combination is the latent vector that determines the shape of a generated maze trajectory. During training, we extract three points that equally slice the trajectory and use them as conditioning. During test time, we specify a new combination of three waypoints we want the generated trajectory to pass.

\begin{figure}[ht]
\centering
\begin{subfigure}{0.2\textwidth}
  \centering
  \includegraphics[width=\linewidth]{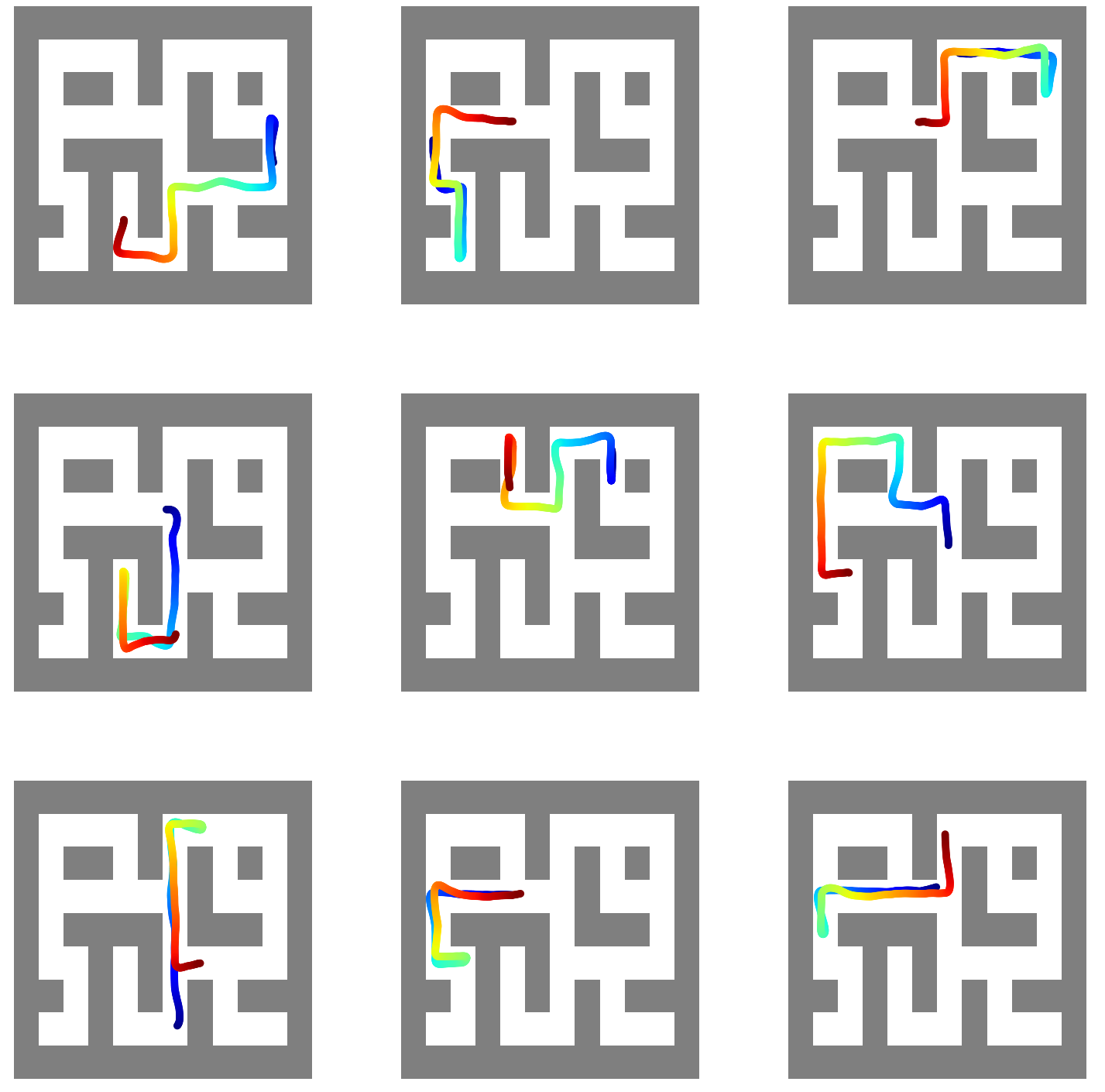}
  \caption{train traj}
  \label{fig:sub1}
\end{subfigure}%
\begin{subfigure}{0.2\textwidth}
  \centering
  \includegraphics[width=\linewidth]{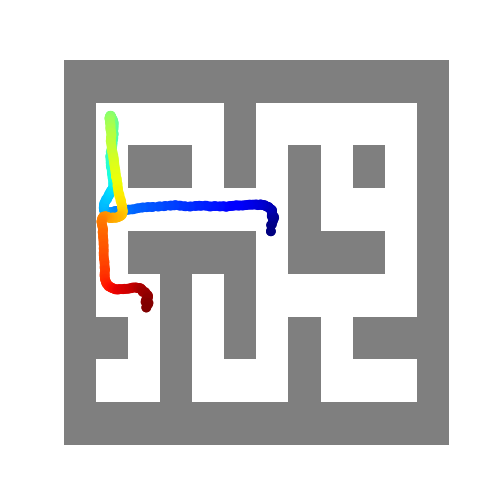}
  \caption{\centering{ID \\ (Unconditioned)}}
  \label{fig:sub2}
\end{subfigure}%
\begin{subfigure}{0.2\textwidth}
  \centering
  \includegraphics[width=\linewidth]{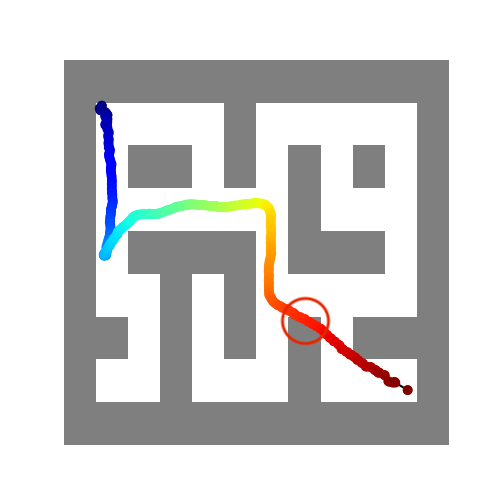}
  \caption{\centering{OOC \\(Unconditioned)}}
  \label{fig:sub3}
\end{subfigure}%
\begin{subfigure}{0.2\textwidth}
  \centering
  \includegraphics[width=\linewidth]{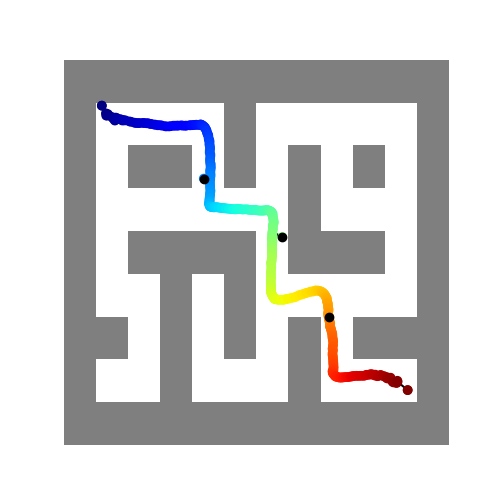}
  \caption{\centering{OOC \\(Conditioned)}}
  \label{fig:sub4}
\end{subfigure}%
\caption{\footnotesize{Trajectories generated in Maze2D for large maze. (a) Samples from the training set. (b) Trajectories generated by the unconditioned diffusion model given in distribution start and end positions. (c) Trajectories generated by the unconditioned diffusion model on unseen start and end positions. (d) Trajectories generated by a conditioned diffusion model using 3 waypoints (black dots) as conditioning. For results in the medium maze please refer to Appendix \ref{app_maze_ex}.}}
\label{fig:maze_test}
\end{figure}

We see that the unconditioned diffusion model successfully generated a trajectory if the start and end positions are in the training set (Figure \ref{fig:sub2}), but failed to do so given unseen start and end positions (Figure \ref{fig:sub3}). We compare the trajectories generated by the unconditioned and conditioned diffusion model given unseen start and end positions in Figure \ref{fig:sub3} and \ref{fig:sub4}. The trajectory generated by the conditioned diffusion model accurately follows the given OOC waypoints and is drastically different compared to the one without conditioning. This highlights the conditioned diffusion model’s ability to generalize to OOC conditioning and its strong accuracy in adhering to the provided conditioning.

\subsection{Model Architecture: Attention vs Concatenation}

We also ablate over different model architectures: (1) concatenating the latent vector $\mathbf{z}$ with diffusion's time embedding, (2) performing cross attention between $\mathbf{z}$ and output of each Unet residual block (Architecture in Figure \ref{fig:unet}). Figure~\ref{fig:starcraft_conditioning_ablation} shows our result: in general conditioned diffusion models outperform unconditioned ones and attention outperforms concatenation in 3 out of 4 cases. 

\begin{figure}[h!]
    \centering
    \includegraphics[width=0.9\textwidth]{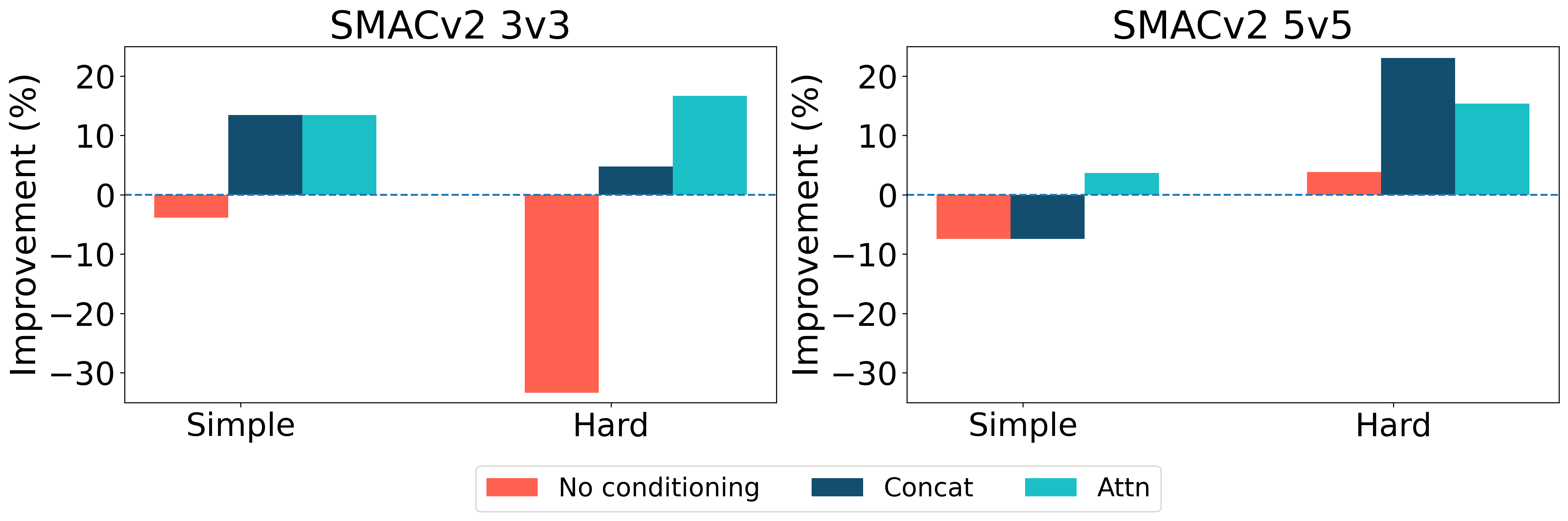}
    \caption{\footnotesize{Improvement percentage over MAPPO for different types of conditioning in SMACv2.}}
    \label{fig:starcraft_conditioning_ablation}
\end{figure}

\begin{tcolorbox}[colback=blue!6!white,colframe=black,boxsep=0pt,left=2.5pt,right=2.5pt,top=2pt,bottom=2pt]
    \footnotesize{\textbf{Ablation Takeaway}: Conditioned diffusion models, provided with information about the new composition of state, are able to generalize to out-of-combination conditioning and accurately adhere to the provided conditioning. Also, cross-attention with the condition vector outperforms simply concatenating it with the time-embedding in most cases. }
\end{tcolorbox}

\section{Conclusions}
\label{conc}

Despite the success of traditional RL models in decision-making tasks, they still struggle to generalize to unseen state inputs. Most existing work focuses on RL generalization under the assumption that generalization to a different probability density function with the same support. However, we take it further and study the problem of generalization to out-of-support states, out of combination in particular, hoping the model can exploit the compositional nature of our world. We showed how this task is challenging for value-based RL and also how conditioned diffusion models can generalize to unsupported samples. We compare the models in different environments with detailed ablation and analysis, demonstrating how each of these classic environments can be formulated as a state combinatorial problem. 

However, one limitation of our setup is we model combinatorial generalization in state space as a combination of base elements, which is valid for many real-world applications but does not cover all cases. Additionally, the model has difficulty with zero-shot generalization to unseen base objects. Another constraint is efficiency, as planning with diffusion models in stochastic environments requires denoising a trajectory at each planning step, which can be computationally intensive. Various approaches have been proposed to accelerate the sampling process in diffusion models via advanced ODE solvers or knowledge distillation techniques~\citep{song2020denoising, karras2022elucidating,kim2023consistency, song2023consistency}. Incorporating these works into diffusion planning would be a promising future direction to address rollout inefficiency.

\subsubsection*{Broader Impact Statement}
Developing data-driven decision-making models carries the risk of generating inappropriate or harmful actions. This work presents a conditioned model that can be manipulated through carefully forged conditioning, potentially leading to malicious actions. Our experiments were conducted in simulated environments, and testing this model in the real world without safety restrictions could be dangerous.

\subsubsection*{Acknowledgments}
This work was supported in part by the U.S. Army Research Office and the U.S. Army Futures Command under Contract No.W519TC-23-C-0030, and also by DSTA. This work was also supported in part of the ONR grant N000142312368. 

\bibliography{main}
\bibliographystyle{tmlr}

\newpage
\appendix

\section{Related Work}
\label{app_rework}

We introduced related problem setup and generalization methods in Section \ref{rel_work}. Attached below is a review of RL environments for generalization. 

\subsection{Generalization in RL}
\paragraph{Environments}

Most RL environments that test model generalization can be grouped into different reward functions~\citep{rajeswaran2017towards, zhang2018dissection, rakelly2019efficient, finn2017model} or transition functions~\citep{dennis2020emergent, machado2018revisiting, packer2018assessing, zhang2018dissection}, goals or tasks~\citep{finn2017model, yu2020meta}, states~\citep{nichol2018gotta, cobbe2019quantifying, juliani2019obstacle, kuttler2020nethack, grigsby2020measuring, hansen2021stabilizing, mees2022calvin, cobbe2020leveraging}. For environments with different state distributions, randomization~\citep{grigsby2020measuring} and procedural generation~\citep{nichol2018gotta, kuttler2020nethack, cobbe2020leveraging} are widely used to generate new states. Some vision-based environments~\citep{juliani2019obstacle, hansen2021stabilizing, mees2022calvin} also use different rendering themes or layouts to generate unseen observations, more targeting sim2real problems. For robotics benchmarks like Metaworld~\citep{yu2020meta} and RLbench~\citep{james2020rlbench}, how much structure is shared between tasks like open a door and open a drawer is ambiguous~\citep{ahmed2020causalworld}. Also, benchmarks like Franka Kitchen~\citep{gupta2019relay} focus on composing tasks at time horizons, requiring the model to concatenate trajectories corresponding to different subtasks. However, despite the large volume of generalization benchmarks, there is no benchmark designed for state combinatorial generalization to our best knowledge. 

\newpage

\section{Proof of Theorem 5.1}
\label{app_proof}
In this section, we provide the full statement and proofs of Theorem 5.1. First, we establish the following assumptions. 

Practically, when the element set is discrete, an embedding $\phi: \rmE^n \rightarrow \mathbb{R}^m$ is applied to map a discrete representation of latent elements to a continuous embedding space. 

\begin{assumption}[Linear manifold assumption] Assume the states lie along a linear manifold $\mathcal{M}$ in the ambient state space $\rmS\subset \mathbb{R}^d$ and the encoded latent space $\phi(\rmZ) \subset \mathbb{R}^m$ with $m < d$ is well constructed so that $\phi(\rmZ)$ is (affine) isomorphic to $\mathcal{M}$.
\label{assumption:linear_manifold}
\end{assumption}
Here we can interpret the linear manifold $\gM$ as the subspace containing all ``in-combination'' (i.e. in support or out-of-combination) data points. Recall we can parametrize the DDPM learned reverse process of a diffusion model as 
\begin{equation}
    p_\theta(\vx_{t-1}|\vx_t):= \mathcal{N} (\vx_{t-1}; \bm{\mu_\theta}(\vx_t, t), \sigma_tI)
    \label{eq:diffusion}
\end{equation}
where $\sigma_t$ is a pre-fixed noise schedule. Given Assumption~\ref{assumption:linear_manifold}, we denote the projection operation onto the manifold $\gM$ as $W_\gM$ and the projector onto its orthogonal complement $\gM^\perp$ as $W_\perp$. We then further assume the diffusion model is well-trained, in particular, it should satisfy the following assumptions:
\begin{assumption}[Well-trained diffusion model] We assume access to a well-trained DDPM diffusion model that satisfies:
\begin{enumerate}
    \item \textbf{Block-wise bi-Lipschitz with contraction:} For all $\vx, \vy \in \rmS$ and all timesteps $t$, there exists constants
    \begin{equation}
        0 < A_t^\gM \leq B_t^\gM < \infty, \qquad 0 < A_t^\perp \leq B_t^\perp < 1
        \label{eq:lipschitz}
    \end{equation}
    such that
    \begin{equation}
    \begin{aligned}
        A_t^\gM\Vert W_\gM(\vx-\vy) \Vert &\leq \Vert W_\gM(\bm{\mu_\theta}(\vx, t)-\bm{\mu_\theta}(\vy, t)) \Vert \leq B_t^\gM\Vert W_\gM(\vx-\vy) \Vert \\
        A_t^\perp\Vert W_\perp(\vx-\vy) \Vert &\leq \Vert W_\perp(\bm{\mu_\theta}(\vx, t)-\bm{\mu_\theta}(\vy, t)) \Vert \leq B_t^\perp\Vert W_\perp(\vx-\vy) \Vert \\
    \end{aligned}
    \end{equation}
    \item \textbf{Block preserving denoising:} For all timesteps $t$, $\bm{\mu_\theta}(\cdot, t)$ preserves the block decomposition. I.e. 
    \begin{equation}
        W_\gM\bm{\mu_\theta}(W_\perp\vx, t) = W_\perp\bm{\mu_\theta}(W_\gM\vx, t) = \mathbf{0}
    \end{equation}
    for all $x \in \rmS$.
\end{enumerate}
\label{assumption:well_trained_diffusion}
\end{assumption}
The first assumption suggests that a well-trained diffusion model will contract the components that are orthogonal to the manifold and pull the samples towards the manifold (i.e. towards ``in-combination'' samples). The second assumption indicates that the learned reverse mean should not mix the two orthogonal components.

Based on these two assumptions, we can now prove the following lemmas that will be helpful for our proof to Theorem 5.1.
\begin{lemma}
    If $X \in \mathbb{R}^d$ has density lower bound
    \begin{equation}
        p_X(\vx) \geq c (2\pi)^{-d/2}(\sigma^\gM)^{-m}(\sigma^\perp)^{m-d}\exp{\left(-\frac{\Vert W_\gM(\vx-\mu)\Vert ^2}{2(\sigma^\gM)^2}-\frac{\Vert W_\perp(\vx-\mu) \Vert^2}{2(\sigma^\perp)^2}\right)}
        \label{eq:lemma3_assumption}
    \end{equation}
    Let $g$ be a function that satisfies block-wise bi-Lipschitz with contraction and block reservation from Assumption~\ref{assumption:well_trained_diffusion} with Lipschitz constants $A^\gM, B^\gM, A^\perp, B^\perp$ and $Y = g(X)$, then for constant $c$ we can have
    \begin{equation}
        p_Y(\vy) \geq c (2\pi)^{-d/2}(\sigma^\gM B^\gM)^{-m}(\sigma^\perp B^\perp)^{m-d}\exp{\left(-\frac{\Vert W_\gM(\vy-g(\mu))\Vert ^2}{2(\sigma^\gM A^\gM)^2}-\frac{\Vert W_\perp(\vy-g(\mu)) \Vert^2}{2(\sigma^\perp A^\perp)^2}\right)}
    \end{equation}
    \label{lemma:gaussian_bound}
\end{lemma}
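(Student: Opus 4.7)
The plan is to apply the standard change-of-variables formula $p_Y(\vy) = p_X(g^{-1}(\vy))\,|\det J_{g^{-1}}(\vy)|$ and propagate the assumed block-Gaussian lower bound on $p_X$ through $g$ by (i) lower-bounding $|\det J_{g^{-1}}|$ using the upper Lipschitz constants and (ii) upper-bounding the quadratic in the exponent using the lower Lipschitz constants. Since both lower bi-Lipschitz constants $A^\gM, A^\perp$ are strictly positive, $g$ is a global bi-Lipschitz homeomorphism, hence by Rademacher's theorem differentiable almost everywhere, so the change-of-variables identity is valid a.e.\ and is all that is needed to produce a density lower bound.

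First I would establish the Jacobian bound. Reading the block-preserving condition in Assumption~\ref{assumption:well_trained_diffusion} as the statement that, in a basis adapted to $\gM \oplus \gM^\perp$, $g$ decomposes as $g = g_\gM \oplus g_\perp$ with $g_\gM:\gM\to\gM$ and $g_\perp:\gM^\perp\to\gM^\perp$, the Jacobian $J_g$ is block diagonal in that basis. The block-wise bi-Lipschitz bounds in~\Eqref{eq:lipschitz} control the singular values of each block: those of the $m\times m$ manifold block lie in $[A^\gM, B^\gM]$ and those of the $(d-m)\times(d-m)$ orthogonal block in $[A^\perp, B^\perp]$. Taking products of singular values yields $|\det J_g| \leq (B^\gM)^{m}(B^\perp)^{d-m}$, hence $|\det J_{g^{-1}}(\vy)| \geq (B^\gM)^{-m}(B^\perp)^{m-d}$. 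This is exactly the Lipschitz factor needed to convert $(\sigma^\gM)^{-m}(\sigma^\perp)^{m-d}$ in the assumed prefactor into $(\sigma^\gM B^\gM)^{-m}(\sigma^\perp B^\perp)^{m-d}$ in the target.

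Second I would transfer the quadratic form in the exponent. Writing $\vx := g^{-1}(\vy)$, the lower bi-Lipschitz inequalities applied to the pair $(\vx,\mu)$ give
\begin{align*}
A^\gM\,\|W_\gM(\vx-\mu)\| &\leq \|W_\gM(\vy-g(\mu))\|,\\
A^\perp\,\|W_\perp(\vx-\mu)\| &\leq \|W_\perp(\vy-g(\mu))\|.
\end{align*}
Squaring, dividing by the appropriate variances, and flipping the sign (since these terms enter the exponent negatively) shows that the exponent in~\Eqref{eq:lemma3_assumption}, evaluated at $\vx$, is bounded below by $-\|W_\gM(\vy-g(\mu))\|^2/(2(\sigma^\gM A^\gM)^2) - \|W_\perp(\vy-g(\mu))\|^2/(2(\sigma^\perp A^\perp)^2)$. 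Substituting both the Jacobian lower bound and this exponent bound into the change-of-variables formula, and grouping $(\sigma^\gM)^{-m}(B^\gM)^{-m} = (\sigma^\gM B^\gM)^{-m}$ together with the analogous identity on the orthogonal side, produces the claimed inequality.

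The main obstacle I anticipate is the precise interpretation of the block-preserving condition. As literally written it controls $\bm{\mu_\theta}$ only on pure on-manifold or pure orthogonal inputs, which is strictly weaker than the decomposition $g = g_\gM \oplus g_\perp$ that I used to conclude that $J_g$ is block-diagonal and hence that $|\det J_g|$ factorizes cleanly across the two blocks. I would either strengthen the assumption to this additive decomposition (the natural continuous version of the stated identities and the reading that makes the Lipschitz constants $(A^\gM,B^\gM,A^\perp,B^\perp)$ block-sharp) or run a short perturbation argument showing that the off-diagonal blocks of $J_g$ vanish a.e.\ as a consequence of the two stated identities together with the bi-Lipschitz structure. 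Once this technicality is settled, the rest of the argument is a routine application of the Lipschitz density-transfer recipe carried out separately on each block.
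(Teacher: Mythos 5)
Your proposal follows essentially the same route as the paper's proof: the exact density transform $p_Y(\vy) = p_X(\vx^*)/\vert \operatorname{det} Dg(\vx^*)\vert$ with $\vx^* = g^{-1}(\vy)$, the lower Lipschitz constants to push the quadratic exponent forward, and the block-diagonal Jacobian with singular values bounded by $B^\gM, B^\perp$ to control the determinant. The technicality you flag about the block-preserving condition is real but is handled identically (i.e., silently assumed to give a block-diagonal $Dg$) in the paper's own argument, so your treatment is if anything slightly more careful.
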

\begin{proof}
    Because $g$ is block-wise bi-Lipschitz, $g$ is also injective and differentiable almost everywhere by Rademacher's theorem. Then using the exact density transform, for almost everywhere $\vy$, we can have $\vx^* = g^{-1}(\vy)$ and 
    \begin{equation}
        p_Y(\vy) = \frac{p_X(\vx^*)}{\vert \operatorname{det} Dg(\vx^*)\vert}
        \label{eq:exact_density_transform}
    \end{equation}
    \Eqref{eq:exact_density_transform} entails that in order to obtain a lower bound for $p_Y(\vy)$, we should obtain a lower bound for $p_X(\vx^*)$ and an upper bound for $\vert \operatorname{det} \nabla_{\vx^*}g(\vx^*)\vert$.

    From~\Eqref{eq:lemma3_assumption} we can directly obtain
    \begin{equation}
        p_X(\vx^*) \geq c (2\pi)^{-d/2}(\sigma^\gM)^{-m}(\sigma^\perp)^{m-d}\exp{\left(-\frac{\Vert W_\gM(\vx^*-\mu)\Vert ^2}{2(\sigma^\gM)^2}-\frac{\Vert W_\perp(\vx^*-\mu) \Vert^2}{2(\sigma^\perp)^2}\right)}
    \end{equation}
    By the Liptschtiz condition we also know that
    \begin{equation}
    \begin{aligned}
        A^\gM\Vert W_\gM(\vx^*-\mu)\Vert &\leq \Vert W_\gM(g(\vx^*)-g(\mu))\Vert \Rightarrow \Vert W_\gM(\vx^*-\mu)\Vert \leq \frac{\Vert W_\gM(\vy-g(\mu))\Vert}{A^\gM} \\
        A^\perp\Vert W_\perp(\vx^*-\mu)\Vert &\leq \Vert W_\perp(g(\vx^*)-g(\mu))\Vert \Rightarrow \Vert W_\perp(\vx^*-\mu)\Vert \leq \frac{\Vert W_\perp(\vy-g(\mu))\Vert}{A^\perp}
    \end{aligned}
    \end{equation}
    So
    \begin{equation}
        p_X(\vx^*) \geq c (2\pi)^{-d/2}(\sigma^\gM)^{-m}(\sigma^\perp)^{m-d}\exp{\left(-\frac{\Vert W_\gM(\vy-g(\mu))\Vert ^2}{2(\sigma^\gM A^\gM)^2}-\frac{\Vert W_\perp(\vy-g(\mu)) \Vert^2}{2(\sigma^\perp A^\perp)^2}\right)}
    \end{equation}

    For $\vert \operatorname{det} Dg(\vx^*)\vert$, by block preservation, we can have
    \begin{equation}
        Dg(\vx^*) = \begin{bmatrix}
        W_\gM Dg(\vx^*)W_\gM & 0 \\
            0 & W_\perp Dg(\vx^*)W_\perp
        \end{bmatrix}
        \label{eq:det}
    \end{equation}
    Now let $v \in \gM$ be a unit vector and $\Delta$ be a scalar, then by Lipschtiz condition, we can have
    \begin{equation}
        \Vert W_\gM(g(\vx^* + \Delta v) -g(\vx^*))\Vert \leq B^\gM\Vert W_\gM(\Delta v)\Vert = B^\gM\vert \Delta \vert
    \end{equation}
    With $\vert \Delta \vert \to 0$, we can have $\Vert W_\gM Dg(\vx^*)v\Vert \leq B^\gM$.
    Since this is true for all unit vector $v \in \gM$, denoting $\lambda_i^\gM$ as the singular values of $W_\gM Dg(\vx^*)W_\gM$, then for all $i \in [1,2,\dots,m]$,
    \begin{equation}
        B^\gM \geq \sup_{\Vert v \Vert = 1} \Vert W_\gM Dg(\vx^*)W_\gM v \Vert \geq s^\gM_i
        \label{eq:manifold_b}
    \end{equation}
    Similarly, for singular values $\lambda_j^\perp$ of $W_\perp Dg(\vx^*)W_\perp$, we can have
    \begin{equation}
        B^\perp \geq \sup_{\Vert v \Vert = 1} \Vert W_\perp Dg(\vx^*)W_\perp v \Vert \geq s^\perp_j
        \label{eq:orthogonal_b}
    \end{equation}
    for all $j \in [1,2,\dots,d-m]$.
    
    Using~\Eqref{eq:det},\ref{eq:manifold_b},\ref{eq:orthogonal_b}, we obtain
    \begin{equation}
        \vert \operatorname{det} Dg(\vx^*)\vert = \vert \operatorname{det} W_\gM Dg(\vx^*)W_\gM \vert \vert \operatorname{det} W_\perp Dg(\vx^*)W_\perp \vert = \prod_{i=1}^m \lambda_i^\gM \prod_{j=1}^{d-m} \lambda_j^\perp \leq (B^\gM)^m(B^\perp)^{d-m}
    \end{equation}
    Putting everything together, we can have
    \begin{equation}
        p_Y(\vy) \geq c (2\pi)^{-d/2}(\sigma^\gM B^\gM)^{-m}(\sigma^\perp B^\perp)^{m-d}\exp{\left(-\frac{\Vert W_\gM(\vy-g(\mu))\Vert ^2}{2(\sigma^\gM A^\gM)^2}-\frac{\Vert W_\perp(\vy-g(\mu)) \Vert^2}{2(\sigma^\perp A^\perp)^2}\right)}
    \end{equation}
\end{proof}

To further facilitate the proof of Theorem 5.1, we further define the recursive mean as
\begin{equation}
    \mu_T = 0, \qquad \mu_t = \mu_\theta(\mu_{t+1},t)
\end{equation}
and the block variance scales as
\begin{equation}
    \sigma_T^\gM = \sigma_T^\perp = \sigma_T = 1, \qquad (\sigma_{t-1}^\gM)^2 = \sigma_t^2 + (A_t^\gM)^2(\sigma_t^\gM)^2, \qquad (\sigma_{t-1}^\perp)^2 = \sigma_t^2 + (A_t^\perp)^2(\sigma_t^\perp)^2
\end{equation}
Now we are ready to prove our Theorem 5.1. Before that, we first provide the formal statement of Theorem 5.1.
\begin{theorem}[Formal statement of Theorem 5.1]
    Under Assumption~\ref{assumption:linear_manifold}, and let $p_\theta$ be a DDPM diffusion using parametrization from~\Eqref{eq:diffusion} that satisfies Assumption~\ref{assumption:well_trained_diffusion}, with some constant $C$, for all $\vs \in S$
    \begin{equation}
        p_\theta(\vs) \geq C\exp{\left(-\frac{\Vert W_\gM(\vs-\mu_0)\Vert ^2}{2(\sigma_0^\gM)^2}-\frac{\Vert W_\perp\vs\Vert^2}{2(\sigma_0^\perp)^2}\right)}
    \end{equation}
\end{theorem}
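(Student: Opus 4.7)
The plan is to establish a block Gaussian lower bound on the marginal density $p_\theta(\vs_t)$ at every reverse timestep $t$ by backward induction from $t=T$ down to $t=0$, keeping the on-manifold and orthogonal components decoupled throughout. Concretely, I would carry the inductive hypothesis
\begin{equation*}
    p_\theta(\vs_t) \geq C_t\,(2\pi)^{-d/2}(\sigma_t^\gM)^{-m}(\sigma_t^\perp)^{m-d}\exp\left(-\frac{\Vert W_\gM(\vs_t - \mu_t)\Vert^2}{2(\sigma_t^\gM)^2} - \frac{\Vert W_\perp(\vs_t - \mu_t)\Vert^2}{2(\sigma_t^\perp)^2}\right),
\end{equation*}
with $\mu_t$ and the block variances $\sigma_t^\gM,\sigma_t^\perp$ evolving by the recursions $\mu_t = \bm{\mu}_\theta(\mu_{t+1},t)$, $(\sigma_{t-1}^\gM)^2 = \sigma_t^2 + (A_t^\gM)^2(\sigma_t^\gM)^2$, $(\sigma_{t-1}^\perp)^2 = \sigma_t^2 + (A_t^\perp)^2(\sigma_t^\perp)^2$, and the prefactor updated via $C_{t-1} = C_t\cdot(B_t^\gM/A_t^\gM)^{-m}(B_t^\perp/A_t^\perp)^{m-d}$.

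For the base case $t=T$, the terminal prior $\vs_T \sim \mathcal{N}(0,I)$ trivially satisfies the hypothesis with $\mu_T = 0$, $\sigma_T^\gM = \sigma_T^\perp = 1$, and $C_T = 1$, once one writes $\Vert \vs_T\Vert^2 = \Vert W_\gM\vs_T\Vert^2 + \Vert W_\perp\vs_T\Vert^2$ using that $W_\gM$ and $W_\perp$ are complementary orthogonal projectors. For the inductive step from $k$ to $k-1$, I would split the reverse transition $\vs_{k-1} = \bm{\mu}_\theta(\vs_k,k) + \sigma_k\epsilon$ into (i) the deterministic pushforward through $\bm{\mu}_\theta(\cdot,k)$, which is block-preserving and block-wise bi-Lipschitz by Assumption~\ref{assumption:well_trained_diffusion}, and (ii) convolution with the isotropic Gaussian $\mathcal{N}(0,\sigma_k^2 I)$. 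Step (i) is precisely the setting of Lemma~\ref{lemma:gaussian_bound}, which returns the same block Gaussian lower bound with center shifted to $\bm{\mu}_\theta(\mu_k,k) = \mu_{k-1}$, on-manifold and orthogonal effective variances rescaled to $(A_k^\gM\sigma_k^\gM)^2$ and $(A_k^\perp\sigma_k^\perp)^2$, and a multiplicative penalty $(B_k^\gM/A_k^\gM)^{-m}(B_k^\perp/A_k^\perp)^{m-d}$ absorbed into the prefactor. Step (ii) is a completing-the-square computation: convolving a block Gaussian lower bound with an independent isotropic Gaussian yields another block Gaussian lower bound whose block variances inflate by $\sigma_k^2$, producing exactly $(\sigma_{k-1}^\gM)^2$ and $(\sigma_{k-1}^\perp)^2$ while leaving the mean and (after the standard normalization rewrite) the prefactor unchanged, closing the induction.

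The main obstacle is verifying that every step of this factorization really preserves the block structure, since any cross term $\langle W_\gM(\cdot), W_\perp(\cdot)\rangle$ would destroy the factorized Gaussian form and make the density bound intractable. This relies crucially on the block-preserving clause of Assumption~\ref{assumption:well_trained_diffusion}: without it, Lemma~\ref{lemma:gaussian_bound} applied through $\bm{\mu}_\theta$ would entangle the two components at every timestep. A secondary subtlety is passing a Gaussian \emph{lower} bound (rather than an equality) through convolution with Gaussian noise, which I would handle by block-wise completing the square inside the integral $\int p_\theta(\vs_k)\,\mathcal{N}(\vs_{k-1};\bm{\mu}_\theta(\vs_k,k),\sigma_k^2 I)\,d\vs_k$ so that only the correct blockwise Gaussian partition constants appear. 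Unrolling the recursion from $t=T$ down to $t=0$ and collecting prefactors gives the stated constant $C = (2\pi)^{-d/2}\bigl(\prod_{t=1}^T(B_t^\gM/A_t^\gM)^{-m}(B_t^\perp/A_t^\perp)^{m-d}\bigr)(\sigma_0^\gM)^{-m}(\sigma_0^\perp)^{m-d}$. To reconcile my intermediate hypothesis with the theorem's $\Vert W_\perp\vs\Vert^2$ (rather than $\Vert W_\perp(\vs-\mu_0)\Vert^2$), I would observe that $\mu_T=0$ together with the strict contraction factors $B_t^\perp < 1$ drives $W_\perp\mu_0$ to zero along the recursion $\mu_t = \bm{\mu}_\theta(\mu_{t+1},t)$, giving the final inequality as stated.
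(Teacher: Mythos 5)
Your proposal is correct and follows essentially the same route as the paper's proof: backward induction on the timestep with the identical block-Gaussian inductive hypothesis, the same recursions for $\mu_t$, $\sigma_t^\gM$, $\sigma_t^\perp$ and the prefactor, Lemma~\ref{lemma:gaussian_bound} for the deterministic pushforward, and variance inflation by $\sigma_k^2$ for the noise convolution. The only small difference is at the very end: the paper obtains $\Vert W_\perp\vs\Vert^2$ by noting that $\mu_T=0\in\gM$ and block preservation force $\mu_t\in\gM$ (hence $W_\perp\mu_0=0$) \emph{exactly}, whereas you attribute this to the contraction $B_t^\perp<1$ ``driving'' $W_\perp\mu_0$ to zero --- contraction alone over finitely many steps would only make it small, so you should lean on the block-preserving clause here as you already do elsewhere.
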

\begin{proof}
    Denoting $p_\theta(\vs_t,t)$ as the density at time $t$ for noisy sample $\vs_t$, we prove the statement by induction.
    \begin{enumerate}
        \item \textbf{Base Case:} At $t=T$, $\vs_T \sim \mathcal{N}(0,I)$, therefore
        \begin{equation}
        p_\theta(\vs_T,T) = (2\pi)^{-d/2}\exp{\left(-\frac{\Vert W_\gM\vs_T\Vert ^2}{2}-\frac{\Vert W_\perp\vs_T\Vert^2}{2}\right)}
    \label{equa:base}
    \end{equation}
        \item \textbf{Inductive Hypothesis:} Assume that for timestep $k$,
        \begin{equation}
            p_\theta(\vs_k,k) \geq (2\pi)^{-d/2}\left(\prod_{t=k+1}^T(\frac{B_t^\gM}{A_t^\gM})^{-m}(\frac{B_t^\perp}{A_t^\perp})^{m-d}\right)(\sigma_k^\gM)^{-m}(\sigma_k^\perp)^{m-d} \exp{\left(-\frac{\Vert W_\gM(\vs-\mu_k)\Vert ^2}{2(\sigma_k^\gM)^2}-\frac{\Vert W_\perp(\vs - \mu_k)\Vert^2}{2(\sigma_k^\perp)^2}\right)}
        \end{equation}
        \item \textbf{Induction Step:} For timestep $k-1$, recall $\vs_{k-1} = \mu_\theta(\vs_k,k) + \sigma_k\epsilon$ for $\epsilon \sim \mathcal{N}(0,I)$, we can break this down into two operations: applying $\mu_\theta(\cdot,k)$ and adding Gaussian white noise.
        
        Since $\mu_\theta$ satisfies Assumption~\ref{assumption:well_trained_diffusion}, let $\vy = \mu_\theta(\vs_k,k)$, we can directly apply Lemma~\ref{lemma:gaussian_bound} and obtain
        \begin{equation}
        p_Y(\vy) \geq c (2\pi)^{-d/2}(\sigma_k^\gM B_k^\gM)^{-m}(\sigma_k^\perp B_k^\perp)^{m-d}\exp{\left(-\frac{\Vert W_\gM(\vy-\mu_\theta(\mu_k,k))\Vert ^2}{2(\sigma_k^\gM A_k^\gM)^2}-\frac{\Vert W_\perp(\vy-\mu_\theta(\mu_k,k)) \Vert^2}{2(\sigma_k^\perp A_k^\perp)^2}\right)}
        \end{equation}
    where $c = \prod_{t=k+1}^T(\frac{B_t^\gM}{A_t^\gM})^{-m}(\frac{B_t^\perp}{A_t^\perp})^{m-d}$. Notice that this is equivalent to

    \begin{align*}
        p_Y(\vy) &\geq c (2\pi)^{-d/2}(\sigma_k^\gM B_k^\gM)^{-m}(\sigma_k^\perp B_k^\perp)^{m-d}\exp{\left(-\frac{\Vert W_\gM(\vy-\mu_\theta(\mu_k,k))\Vert ^2}{2(\sigma_k^\gM A_k^\gM)^2}-\frac{\Vert W_\perp(\vy-\mu_\theta(\mu_k,k)) \Vert^2}{2(\sigma_k^\perp A_k^\perp)^2}\right)} \\
        &= (2\pi)^{-d/2}\prod_{t=k+1}^T(\frac{B_t^\gM}{A_t^\gM})^{-m}(\frac{B_t^\perp}{A_t^\perp})^{m-d}(\sigma_k^\gM B_k^\gM)^{-m}(\sigma_k^\perp B_k^\perp)^{m-d}\\
        &\;\;\;\;\;\;\;\;\;\;\;\;\;\;\;\exp{\left(-\frac{\Vert W_\gM(\vy-\mu_{k-1})\Vert ^2}{2(\sigma_k^\gM A_k^\gM)^2}-\frac{\Vert W_\perp(\vy-\mu_{k-1}) \Vert^2}{2(\sigma_k^\perp A_k^\perp)^2}\right)} \\
        &= (2\pi)^{-d/2}\prod_{t=k}^T(\frac{B_t^\gM}{A_t^\gM})^{-m}(\frac{B_t^\perp}{A_t^\perp})^{m-d}(\sigma_k^\gM A_k^\gM)^{-m}(\sigma_k^\perp A_k^\perp)^{m-d}\\
        &\;\;\;\;\;\;\;\;\;\;\;\;\;\;\;\exp{\left(-\frac{\Vert W_\gM(\vy-\mu_{k-1})\Vert ^2}{2(\sigma_k^\gM A_k^\gM)^2}-\frac{\Vert W_\perp(\vy-\mu_{k-1}) \Vert^2}{2(\sigma_k^\perp A_k^\perp)^2}\right)} \\
    \end{align*}
    Adding zero-mean Gaussian white noise to $\vy$ with variance $\sigma_k^2$ produces the desirable $\vs_{k-1}$, which increase the variance in the original Gaussian bound by $\sigma_k^2$. Therefore, we can obtain
     \begin{align}
        p_\theta(\vs_{k-1},k-1) &\geq (2\pi)^{-d/2}\left(\prod_{t=k}^T(\frac{B_t^\gM}{A_t^\gM})^{-m}(\frac{B_t^\perp}{A_t^\perp})^{m-d}\right)(\sigma_k^2 + (A_k^\gM)^2(\sigma_k^\gM)^2)^{-m/2}(\sigma_k^2 + (A_k^\perp)^2(\sigma_k^\perp)^2)^{(m-d)/2} \\
        &\;\;\;\;\;\;\;\;\;\;\;\;\;\;\;\exp{\left(-\frac{\Vert W_\gM(\vs-\mu_{k-1})\Vert ^2}{2(\sigma_k^2 + (A_k^\gM)^2(\sigma_k^\gM)^2)}-\frac{\Vert W_\perp(\vs-\mu_{k-1})\Vert^2}{2(\sigma_k^2 + (A_k^\gM)^2(\sigma_k^\gM)^2)}\right)} \\
        &\geq (2\pi)^{-d/2}\left(\prod_{t=k}^T(\frac{B_t^\gM}{A_t^\gM})^{-m}(\frac{B_t^\perp}{A_t^\perp})^{m-d}\right)(\sigma_{k-1}^\gM)^{-m}(\sigma_{k-1}^\perp)^{m-d} \\
        &\;\;\;\;\;\;\;\;\;\;\;\;\;\;\;\exp{\left(-\frac{\Vert W_\gM(\vs-\mu_{k-1})\Vert ^2}{2(\sigma_{k-1}^\gM)^2}-\frac{\Vert W_\perp(\vs-\mu_{k-1})\Vert^2}{2(\sigma_{k-1}^\perp)^2}\right)}
    \end{align}

    Observe that since $\mu_T \in \gM$ and $\mu_\theta$ is block preserving, $\mu_t \in \gM$ for all $t$.\, we can simplify the expression to be
    \begin{equation}
        p_\theta(\vs_{k-1},k-1) \geq (2\pi)^{-d/2}\left(\prod_{t=k}^T(\frac{B_t^\gM}{A_t^\gM})^{-m}(\frac{B_t^\perp}{A_t^\perp})^{m-d}\right)(\sigma_{k-1}^\gM)^{-m}(\sigma_{k-1}^\perp)^{m-d} \exp{\left(-\frac{\Vert W_\gM(\vs-\mu_{k-1})\Vert ^2}{2(\sigma_{k-1}^\gM)^2}-\frac{\Vert W_\perp\vs\Vert^2}{2(\sigma_{k-1}^\perp)^2}\right)}
    \end{equation}
    \end{enumerate}
    Hence we conclude the induction.

    Let $C=(2\pi)^{-d/2}\left(\prod_{t=1}^T(\frac{B_t^\gM}{A_t^\gM})^{-m}(\frac{B_t^\perp}{A_t^\perp})^{m-d}\right)(\sigma_0^\gM)^{-m}(\sigma_0^\perp)^{m-d}$, we arrive at
    \begin{equation}
        p_\theta(\vs) \geq C\exp{\left(-\frac{\Vert W_\gM(\vs-\mu_0)\Vert ^2}{2(\sigma_0^\gM)^2}-\frac{\Vert W_\perp\vs\Vert^2}{2(\sigma_0^\perp)^2}\right)}
    \end{equation}
\end{proof}

From this theorem above, we can make two observations: Firstly, the density assigned at a certain point depends on two factors: (1) how far it is along the manifold from the model’s noise-free center, and (2) how far it is off the manifold. Secondly, since $0 < A_t^\perp \leq B_t^\perp < 1$, off-manifold components incurs much harsher penalty under the DDPM model, with likelihood dropping rapidly as the off-manifold component grows. In other words, a well-trained DDPM model concentrates on in-combination states, including the OOC ones. How much density it assigns to each in-combination state is determined by how far away the state is from the model learned noise-free center.

\newpage

\section{Experiment Details}
\label{app_expdetail}

\subsection{Hardware and Platform}
\label{app_hard}

Experiments are run on a single NVIDIA RTX A6000 GPUs, with all code implemented in PyTorch.

\subsection{Statistics}
\label{app_stat}

All mean value is obtained by running with three different seeds and calculated with numpy.mean(). All error bar is obtained by numpy.std().

\subsection{Model Architecture}
\label{app_archi}

The backbone for Unet is based on \cite{janner2022planning}. We add cross-attention blocks after each residual block, except for the bottleneck layers. Inputs to the cross-attention blocks are the conditioning embedding and output of the residual block. To ensure local consistency of trajectory, we used 1D convolution along the horizon dimension. To keep the number of parameters for cross attention and the original Unet relatively balanced, we also used 1D convolution as the mapping from input to key, query, and value. Detailed model architecture is shown below in Figure \ref{fig:unet}. 

\begin{figure}[ht]
\centering
  \centering
  \includegraphics[width=\linewidth]{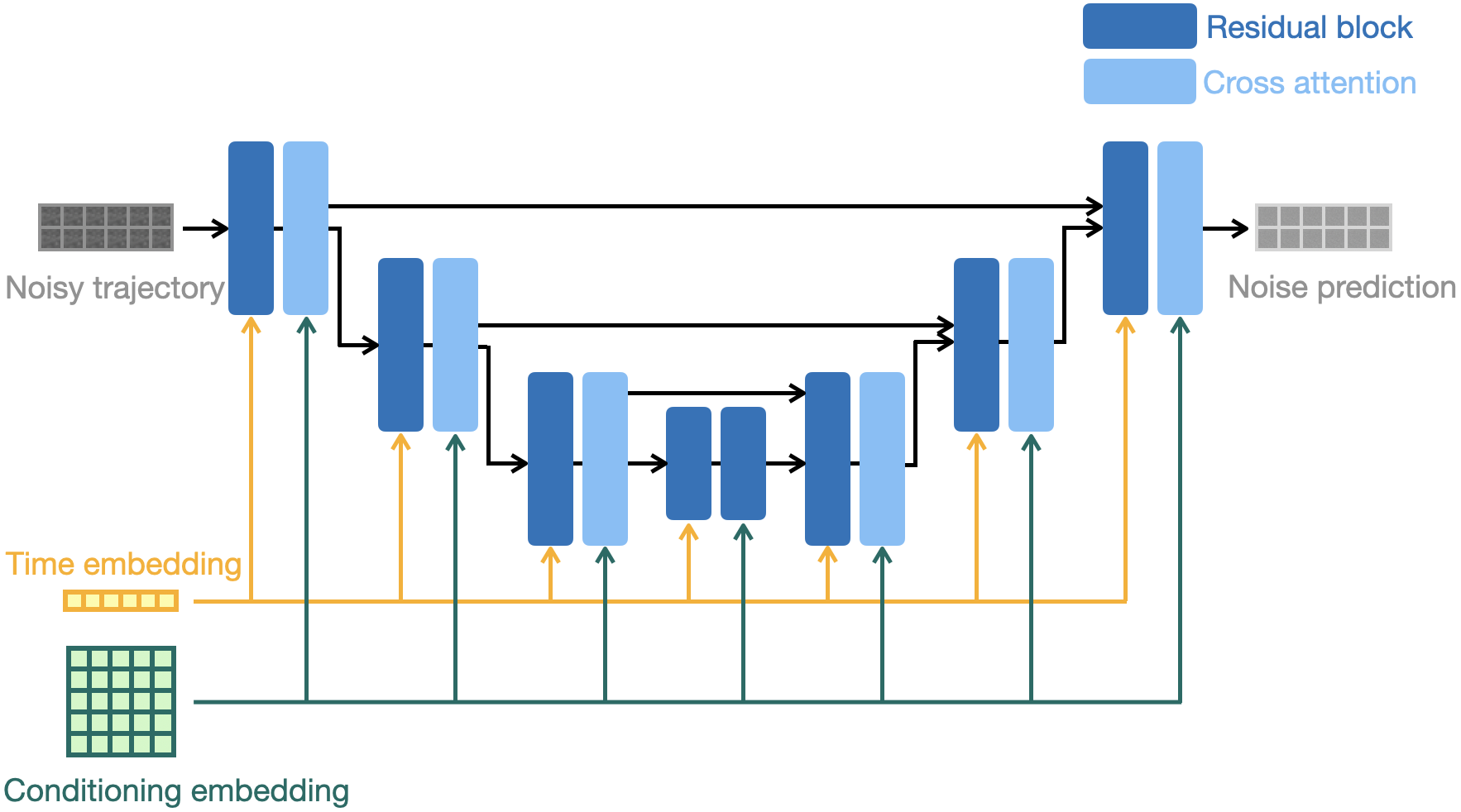}
  \caption{Model architecture. }
  \label{fig:unet}
\end{figure}

The number of down-sampling/up-sampling and feature channel sizes is different for each experiment. Detailed parameters can be found in the section for each experiment. 

\newpage

\subsection{Trajectory formulation}
\label{app_traj_for}
The trajectory $\bm{\tau} \in \mathbb{R}^d$ is represented by concatenating the state $\bm{s_u}\in \mathbb{R}^{d_S}$ and the action $\bm{a_u} \in \mathbb{R}^{d_A}$ at planning time step $u$ and then horizontally stacking them for all time steps. For example, a trajectory with planning horizon $h$ can be written as 
$\bm{\tau} =
\Bigg[ 
\begin{aligned}
&\bm{s_1} &\bm{s_2} &\dots &\bm{s_h} \\
&\bm{a_1} &\bm{a_2} &\dots &\bm{a_h}
\end{aligned}
 \Bigg]$.

\subsection{Maze2D}
\label{app_maze}

\subsubsection{Extra Results}
\label{app_maze_ex}

\begin{figure}[ht]
\centering
\begin{subfigure}{0.2\textwidth}
  \centering
  \includegraphics[width=\linewidth]{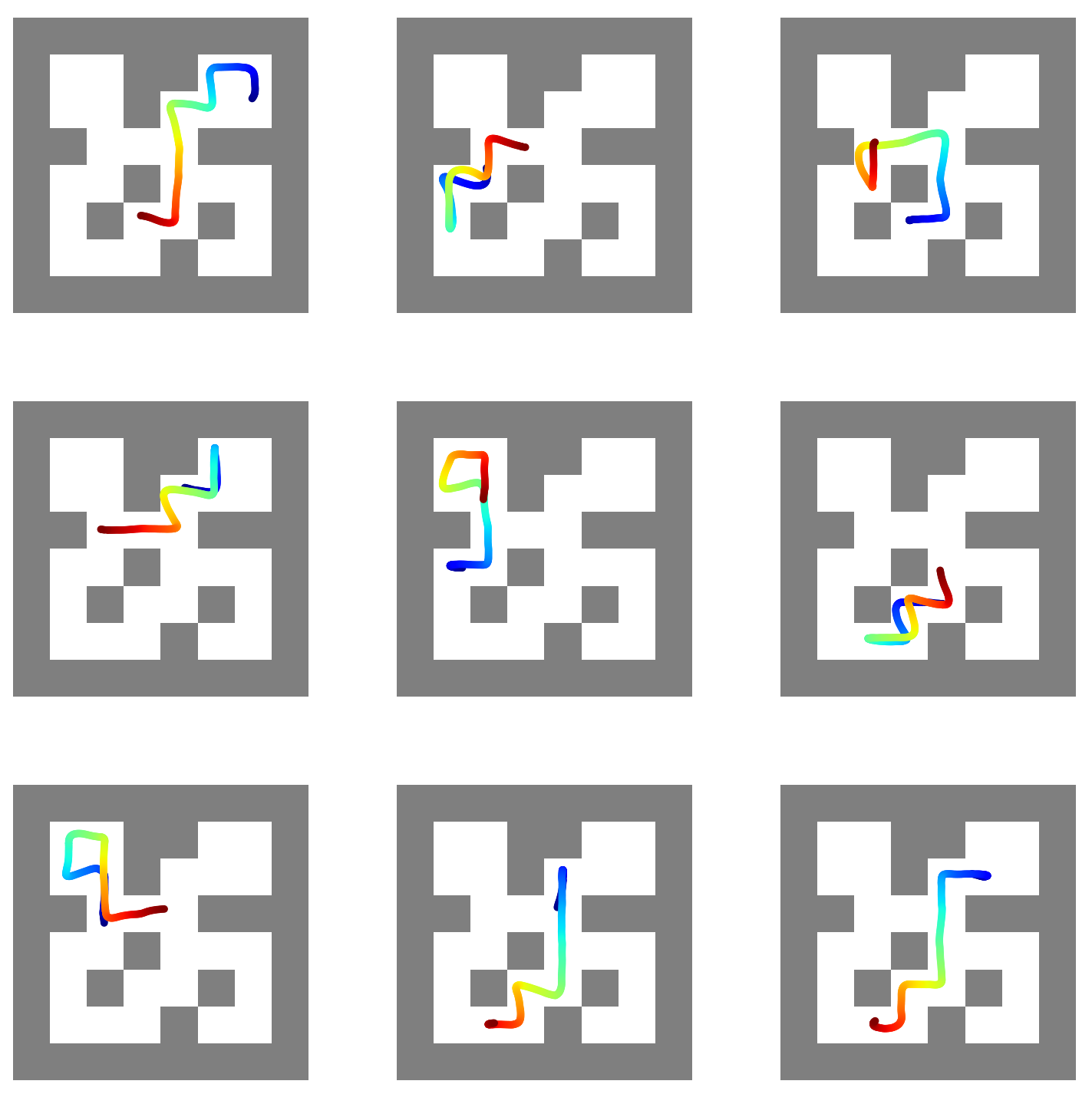}
  \caption{train traj}
  \label{fig:sub5}
\end{subfigure}%
\begin{subfigure}{0.2\textwidth}
  \centering
  \includegraphics[width=\linewidth]{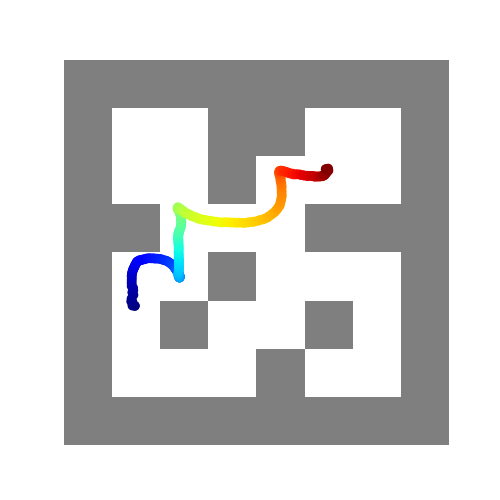}
  \caption{\centering{ID \\(Unconditioned)}}
  \label{fig:sub6}
\end{subfigure}%
\begin{subfigure}{0.2\textwidth}
  \centering
  \includegraphics[width=\linewidth]{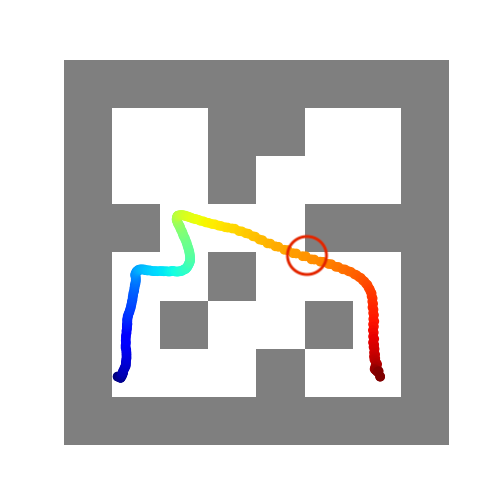}
  \caption{\centering{OOC \\(Unconditioned)}}
  \label{fig:sub7}
\end{subfigure}%
\begin{subfigure}{0.2\textwidth}
  \centering
  \includegraphics[width=\linewidth]{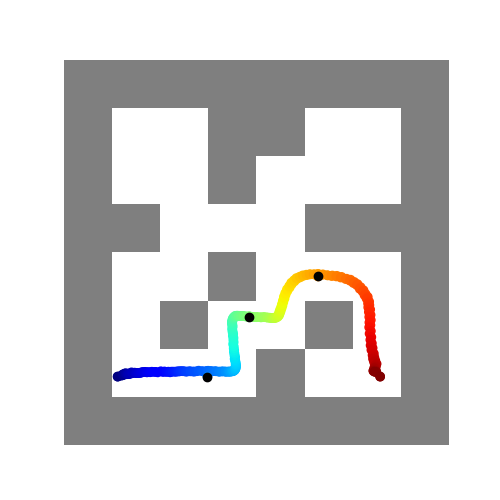}
  \caption{\centering{OOC \\(Conditioned)}}
  \label{fig:sub8}
\end{subfigure}%

\caption{Trajectories generated in Maze2D for medium maze. (a) are samples from the training set. (b) are trajectories generated by the unconditioned diffusion model given in distribution start and end positions. (c) are generated by the unconditioned diffusion model on unseen start and end positions. (d) are generated by a conditioned diffusion model using 3 waypoints (black dots) as conditioning with classifier-free guidance (cfg) weight 1.3. }
\label{fig:maze_mid}
\end{figure}

\subsubsection{Experiment Details}
\label{app_maze_param}

We followed the setup used in \cite{janner2022planning}. The hyperparameters shared for large and medium mazes are shown below in Table \ref{param:large}. Large maze use a planning horizon of 384 and medium maze use a planning horizon of 256. Conditioning is passed through a positional embedding layer first to map each dimension of the waypoint $(x, y, v_x, v_y)$ to a higher dimension of 21 and concatenate them to form a vector of size $(1, 21*4)$. Three waypoints are then stacked together to form a matrix of size $(3, 21*4)$ and passed into the cross-attention layer. In our experiment, directly using the waypoints as conditioning was unsuccessful.

\begin{table}[ht]
\centering
\begin{tabular}{@{}ll@{}}
\toprule
 \textbf{Parameter} & \textbf{Value} \\ \midrule
 number of diffusion steps & 256 \\
 action weight & 1 \\
 dimension multipliers & (1, 4, 8) \\
 classifier free guidance drop conditioning probability & 0.1  \\
 steps per epoch & 10000 \\
 loss type & l2 \\
 train steps & 2e6 \\
 batch size & 32 \\
 learning rate & 2e-4 \\
 gradient accumulate every & 2 \\
 ema decay & 0.995 \\
\bottomrule
\end{tabular}
\caption{Training parameter for diffusion model in Maze2D}
\label{param:large}
\end{table}

\newpage
\subsection{Roundabout}
\label{app_round}

\subsubsection{Environment}
\label{app_round_env}

The training environment consists of all cars or all bicycles and the testing environment is a mixture of traffic (Figure \ref{fig:round}).

\begin{figure}[ht]
\centering
\begin{subfigure}{0.4\textwidth}
  \centering
  \includegraphics[width=\linewidth]{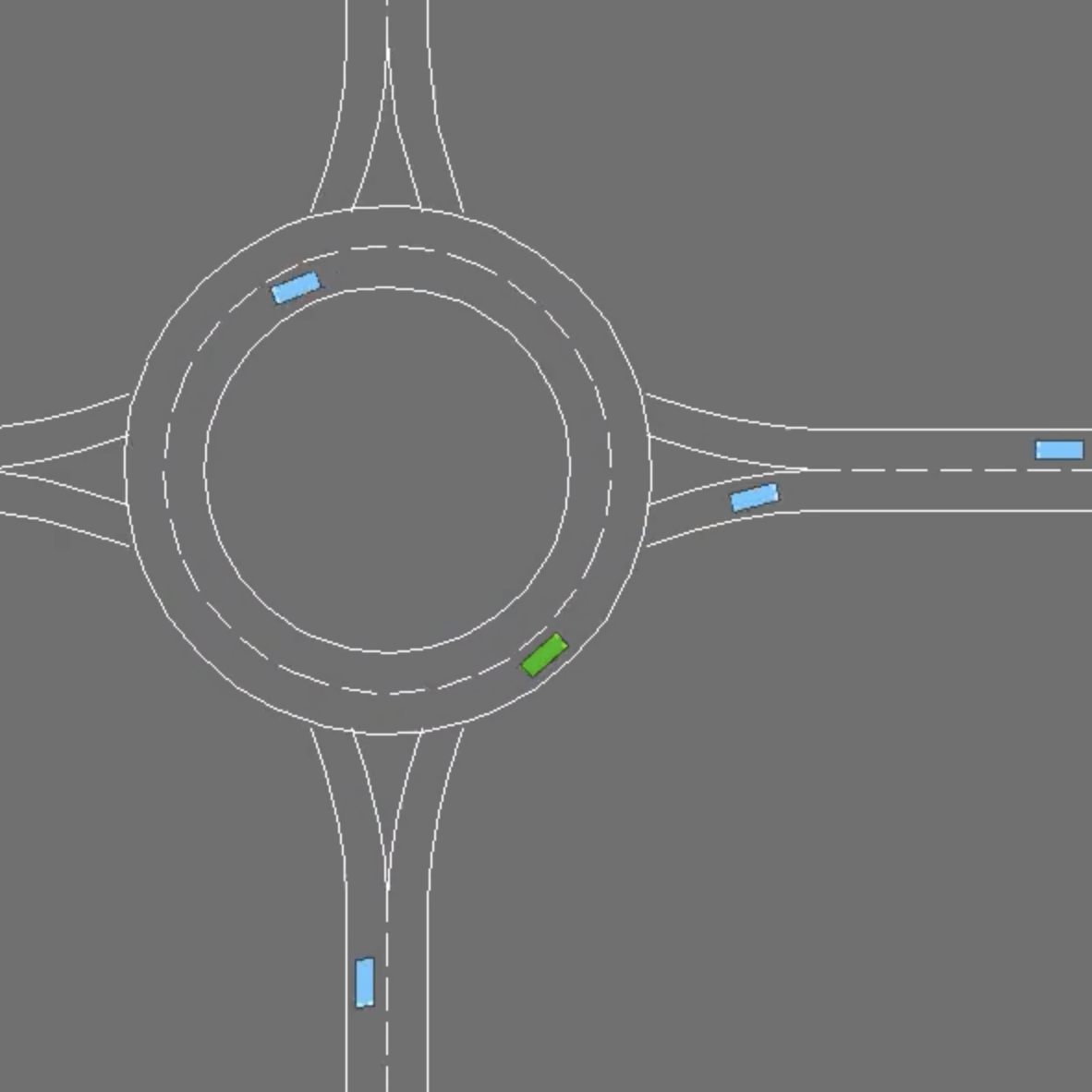}
  \caption{train env 1 (cars)}
  \label{fig:round_sub1}
\end{subfigure}%
\hspace{1mm}
\begin{subfigure}{0.4\textwidth}
  \centering
  \includegraphics[width=\linewidth]{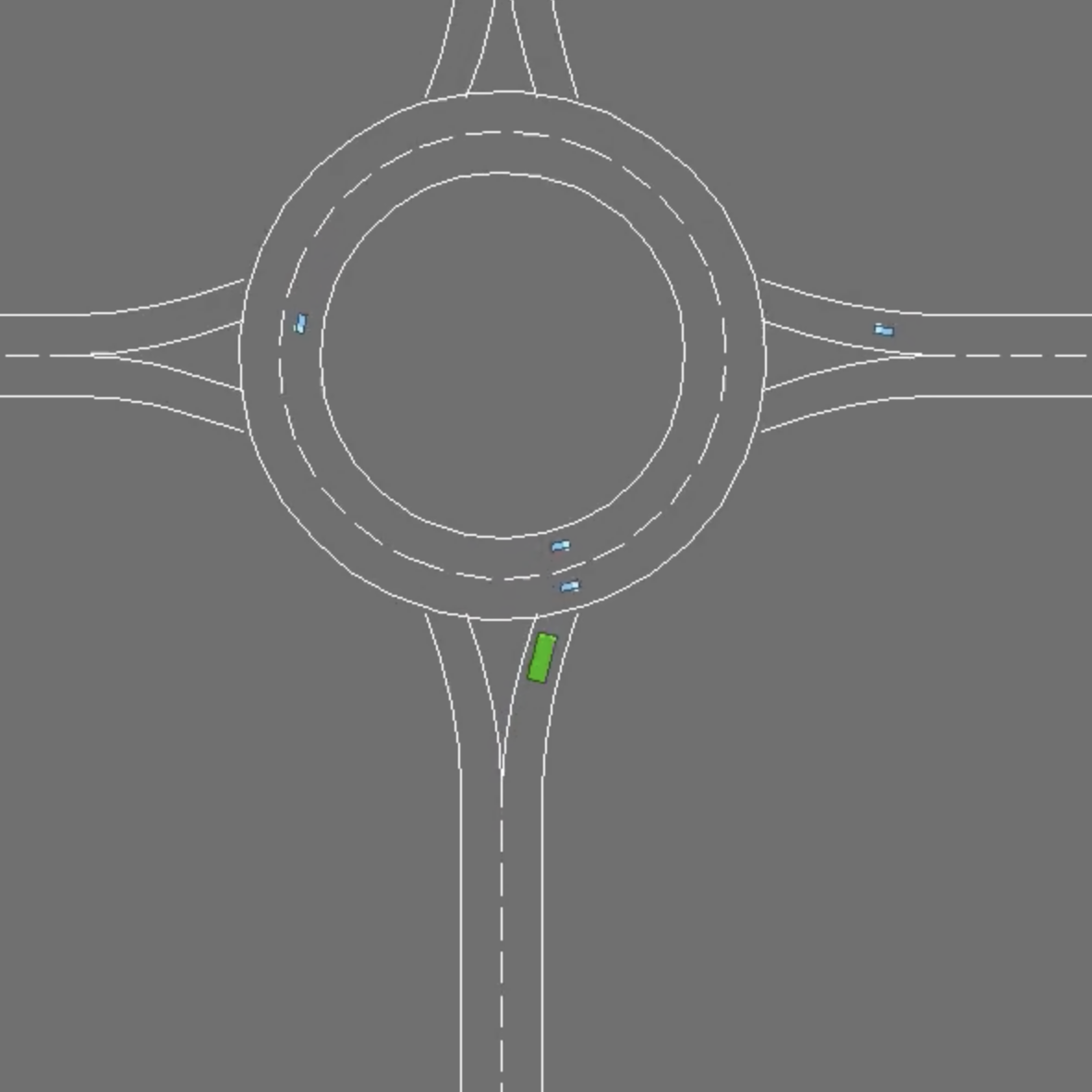}
  \caption{train env 2 (bikes)}
  \label{fig:round_sub2}
\end{subfigure}%
\\
\begin{subfigure}{0.4\textwidth}
  \centering
  \includegraphics[width=\linewidth]{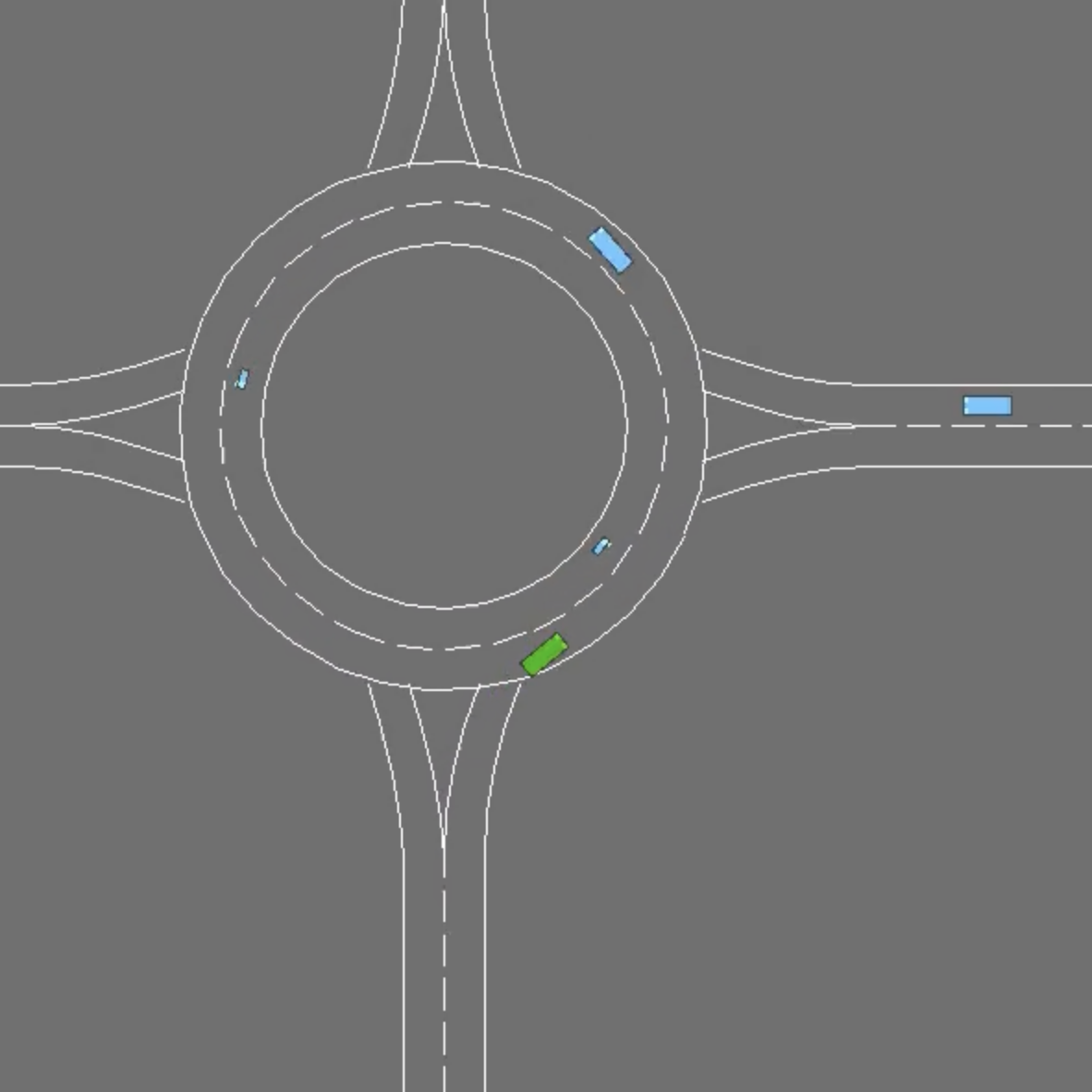}
  \caption{test env 1}
  \label{fig:round_sub3}
\end{subfigure}%
\hspace{1mm}
\begin{subfigure}{0.4\textwidth}
  \centering
  \includegraphics[width=\linewidth]{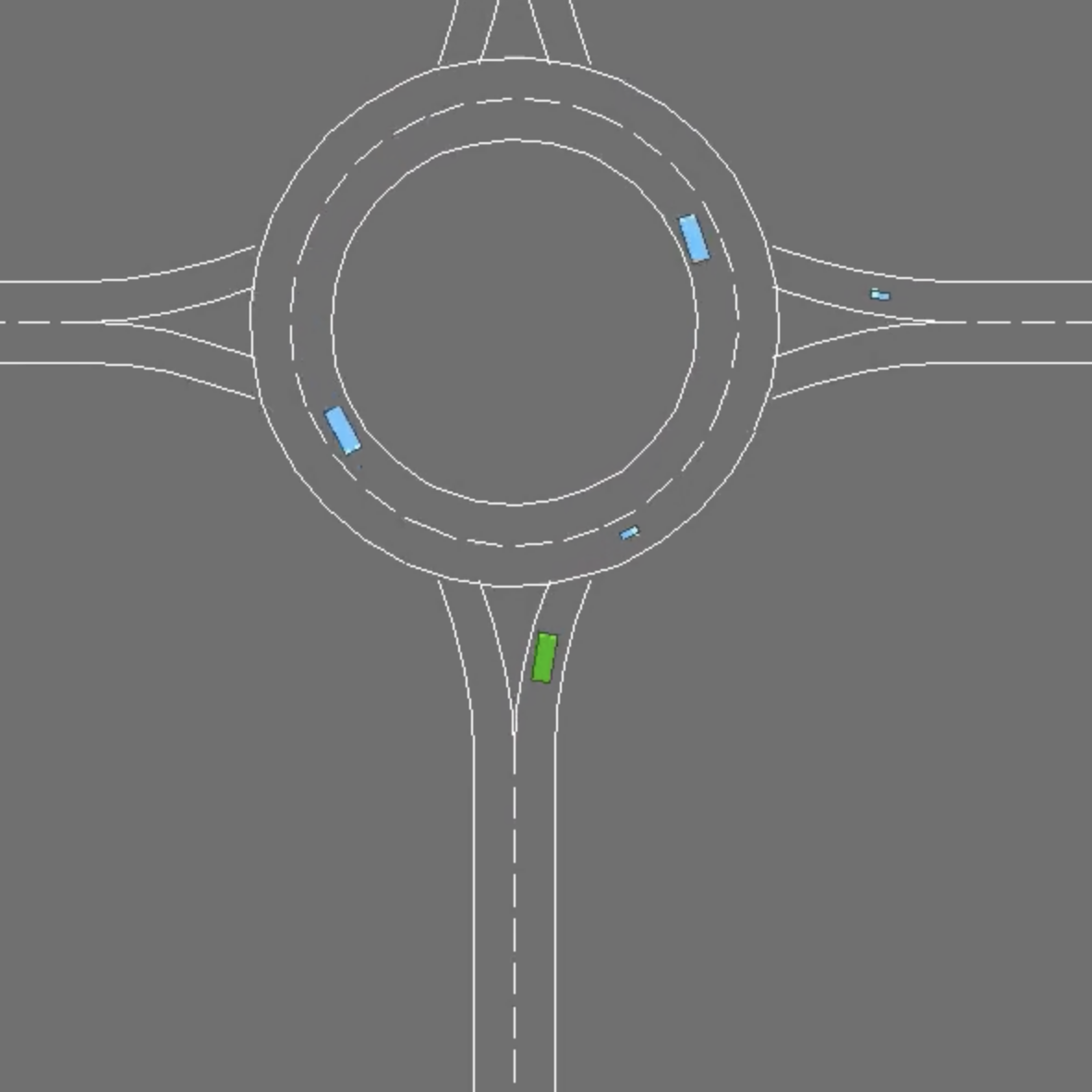}
  \caption{test env 2}
  \label{fig:round_sub4}
\end{subfigure}%

\caption{Training and testing environments for Roundabout. The green vehicle is the ego agent and the blue ones are controlled by the environment. The large blue box represents a car and the small blue box represents a bicycle. }
\label{fig:round}
\end{figure}

\subsubsection{Environment Parameters}
\label{app_round_ep}

We changed the parameters to create a different type of traffic in the roundabout as shown below in Table \ref{param:round}. Also, since bicycles have slower speeds, we change the initialization position so that each environment vehicle can interact with the ego vehicle.

\begin{table}[ht]
\centering
\begin{tabular}{@{}ll|ll@{}}
\toprule
 \textbf{Parameter} & \textbf{Car} & \textbf{Bicycle} \\ \midrule
 length & 5.0 & 2.0 \\
 width & 2.0 & 1.0 \\
 speed & [23, 25] & 4 \\
 max acceleration & 6.0 & 2.0 \\
 comfort max acceleration & 3.0 & 1.0 \\
\bottomrule
\end{tabular}
\caption{Parameters for car and bicycles in Roundabout environment}
\label{param:round}
\end{table}

\subsubsection{Dataset}
\label{app_round_det}

In order to collect expert trajectories, we train two PPO models separately on the environment with all cars and all bicycles. We then collect 320000 successful trajectories in the training environment. All trajectories have a unified length of 12.

\subsection{Setup for Roundabout Environment} \label{app:roundabout_setup}

Here we describe how the Roundabout task in this paper conforms to our problem description. In this setting our base object set is $E = \{\text{car}, \text{bicycle}, \text{null} \}$ 
where $\text{null}$ means an object is non-visible. Since the maximum number of objects in the roundabout is five
and we fix the ego agent to be a car, support for the training observation is $\{ (\text{car (ego agent)}, \text{car}, \text{car}, \text{car}, \text{car}), (\text{car (ego agent)}, \text{bicycle}, \text{bicycle}, \text{bicycle}, \text{bicycle})\}$ and for the testing observation is $\{(\text{car (ego agent)}, \text{bicycle}, \text{bicycle}, \text{car}, \text{car})\}$ assuming no ordering and when the state is fully observable. Since the supports for training and testing are non-overlapping under full observability, they will remain non-overlapping even when some traffic objects are out of sight, unless the ego agent is the only object present in the environment. 

\subsubsection{Experiment Details}
\label{app_round_exp}

We use stable\_baseline3 \cite{stable-baselines3} as the implementation for PPO. The parameter is the default parameter used in the Highway environment \cite{highway-env}. We increased total timesteps because the environment now has two modalities (all cars and all bicycles) and we observed that PPO takes longer to converge. Detailed parameters for PPO and diffusion are shown below in Table \ref{param:roundPPO} and Table \ref{param:round}.

\begin{table}[ht]
\centering
\begin{tabular}{@{}ll@{}}
\toprule
 \textbf{Parameter} & \textbf{Value} \\ \midrule
 policy & MlpPolicy \\
 batch size & 64 \\
 n\_steps & 768 \\
 n\_epochs & 10 \\
 learning rate & 5e-4 \\
 gamma & 0.8 \\
 total timesteps & 2e5 \\
\bottomrule
\end{tabular}
\caption{Training parameter for PPO}
\label{param:roundPPO}
\end{table}

\begin{table}[H]
\centering
\begin{tabular}{@{}ll@{}}
\toprule
 \textbf{Parameter} & \textbf{Value} \\ \midrule
 planning horizon & 8 \\
 number of diffusion steps & 80 \\
 action weight & 10 \\
 dimension multipliers & (1, 4, 8) \\
 conditioning embedding size & 20 \\
 classifier free guidance drop conditioning probability & 0.1  \\
 classifier free guidance weight & 1.0\\
 steps per epoch & 10000 \\
 loss type & l2 \\
 train steps & 1e4 \\
 batch size & 32 \\
 learning rate & 2e-4 \\
 gradient accumulate every & 2 \\
 ema decay & 0.995 \\
\bottomrule
\end{tabular}
\caption{Training parameter for diffusion model in Roundabout}
\label{param:round}
\end{table}

\subsubsection{Model Size}
\label{app_round_size}

We include the model size for different algorithms below in Table \ref{size:round}. To eliminate the concern for performance gain due to model size, we include the performance of a large BC model that has roughly the same number of parameters as the conditioned diffusion model.

\begin{table}[ht]
\centering
\begin{tabular}{{@{}ccccc@{}}}
\toprule
 \textbf{ } & \textbf{BC} & \textbf{PPO} & \textbf{Diffusion} & \textbf{Large BC} \\ 
 \hline
 Model size & 0.30 MB  & 0.60 MB & 54.19 MB & 55.65 MB  \\ 
 Number of parameters &  75013 & 148998 & 13546370 & 13912325  \\ 
 OOD reward & 7.50 (0.03) & 8.19 (0.16) & 8.81 (0.2)  & 7.71 (0.3) \\ 
 OOD crashes & 37.7 (0.5) & 36.0 (5.7) & 20.0 (2.5)  & 37.3 (4.0) \\
\bottomrule
\end{tabular}
\caption{Model size, number of parameters, and performance for different models. }
\label{size:round}
\end{table}

\newpage

\subsection{StarCraft}
\label{app_sc}

\subsubsection{Experiment Details}
\label{app_sc_det}

We use the codebase OpenRL \cite{huang2023openrl} for the implementation of MAPPO. Detailed parameters for MAPPO can be found in Table \ref{param:scmappo}.

\begin{table}[ht]
\centering
\begin{tabular}{@{}ll@{}}
\toprule
 \textbf{Parameter} & \textbf{Value} \\ \midrule
learning rate actor & 5e-4 \\
learning rate critic & 1e-3 \\
data chunk length &  8 \\
env num & 8 \\
episode length &  400 \\
PPO epoch & 5 \\ 
actor train interval step & 1 \\
use recurrent policy & True \\
use adv normalize & True \\
use value active masks & False \\
use linear LR decay & True \\
\bottomrule
\end{tabular}
\caption{MAPPO hyper-parameters used for SMACv2. We utilize the hyperparameters used in SMACv2~\cite{ellis2022smacv2}.}
\label{param:scmappo}
\end{table}

Detailed parameters for training a conditioned diffusion model for 5v5 are shown below in Table \ref{param:5v5} and 3v3 in Table \ref{param:3v3}.

\begin{table}[ht]
\centering
\begin{tabular}{@{}ll@{}}
\toprule
 \textbf{Parameter} & \textbf{Value} \\ \midrule
 planning horizon & 40 \\
 number of diffusion steps & 256 \\
 action weight & 1 \\
 dimension multipliers & (1, 4, 8) \\
 conditioning embedding size & 40 \\
 classifier free guidance drop conditioning probability & 0.1  \\
 classifier free guidance weight & [0.7, 1.0, 1.3, 1.5]\\
 steps per epoch & 10000 \\
 loss type & l2 \\
 train steps & 2e6 \\
 batch size & 32 \\
 learning rate & 2e-4 \\
 gradient accumulate every & 2 \\
 ema decay & 0.995 \\
\bottomrule
\end{tabular}
\caption{Training parameter for diffusion model in StarCraft for 5v5}
\label{param:5v5}
\end{table}

\begin{table}[ht]
\centering
\begin{tabular}{@{}ll@{}}
\toprule
 \textbf{Parameter} & \textbf{Value} \\ \midrule
 planning horizon & 32 \\
 number of diffusion steps & 256 \\
 action weight & 1 \\
 dimension multipliers & (1, 4, 8) \\
 conditioning embedding size & 40 \\
 classifier free guidance drop conditioning probability & 0.1  \\
 classifier free guidance weight & [0.7, 1.0, 1.3, 1.5]\\
 steps per epoch & 10000 \\
 loss type & l2 \\
 train steps & 2e6 \\
 batch size & 32 \\
 learning rate & 2e-4 \\
 gradient accumulate every & 2 \\
 ema decay & 0.995 \\
\bottomrule
\end{tabular}
\caption{Training parameter for diffusion model in StarCraft for 3v3}
\label{param:3v3}
\end{table}

\newpage

\subsubsection{Dataset Initial State Distribution}
\label{app_sc_data}

The probability of generating each unit type in SMACv2 is imbalanced. Specifically, the probability for Stalker, Zealot, and Colossus is 0.45, 0.45, and 0.1 respectively. The initial state distribution of training trajectories collected by MAPPO for random combination is shown below in Figure \ref{fig:sc_3v3} and \ref{fig:sc_5v5}. Since we only keep the successful trajectories and use them as expert data, the distribution depends on the generation probability and MAPPO success rate for different team combinations. A total number of 240000 trajectories were used to train the diffusion model. Since diffusion is trained on local observations and actions of all MAPPO actors, the total number of training samples is 5*240000 for 5v5 and 3*240000 for 3v3.

\begin{figure}[ht]
\centering
\begin{subfigure}{0.5\textwidth}
  \centering
  \includegraphics[width=\linewidth]{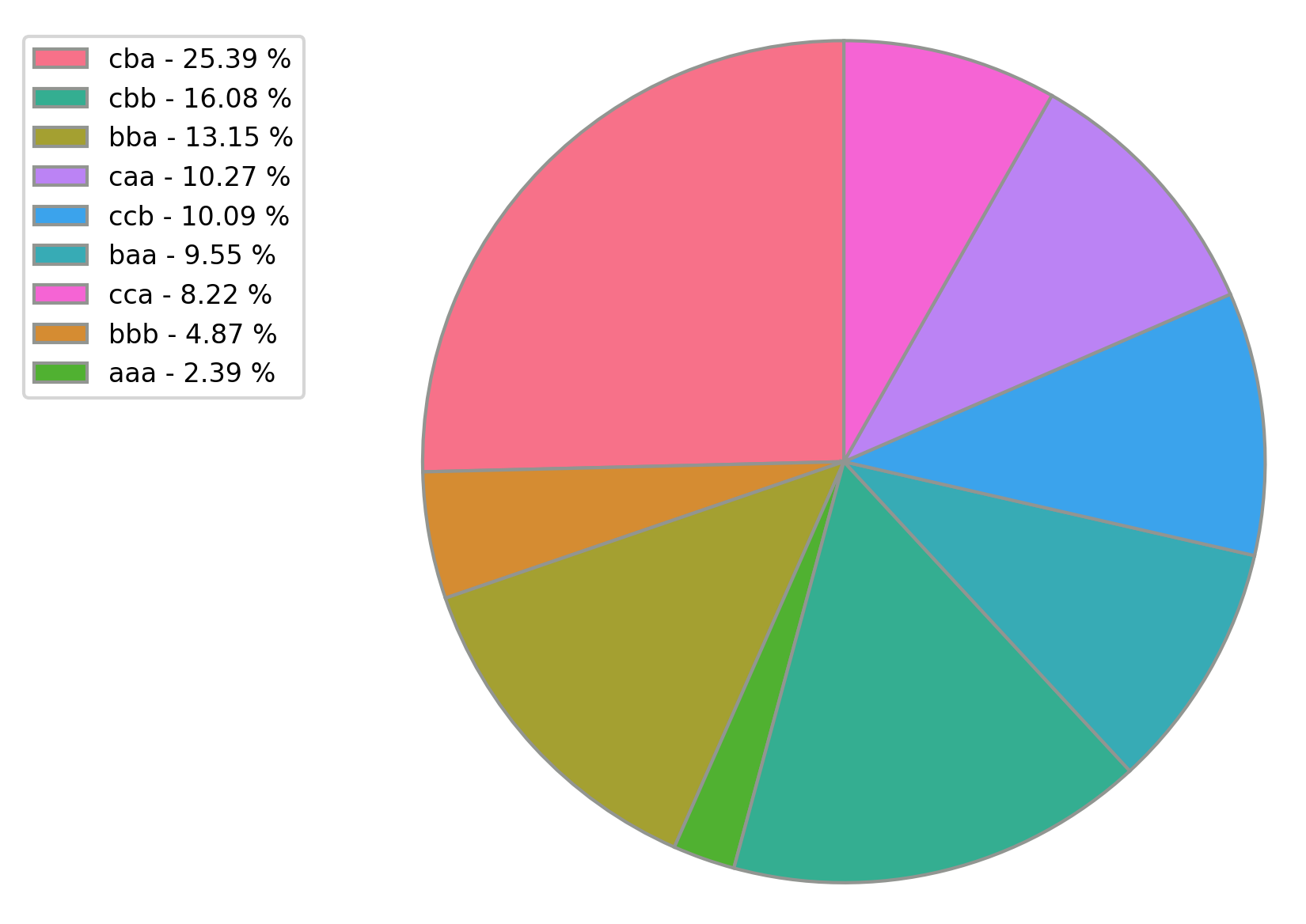}
  \caption{Distribution of initial state for 3v3 simple scenario}
  \label{fig:sc_3v3}
\end{subfigure}%
\begin{subfigure}{0.5\textwidth}
  \centering
  \includegraphics[width=\linewidth]{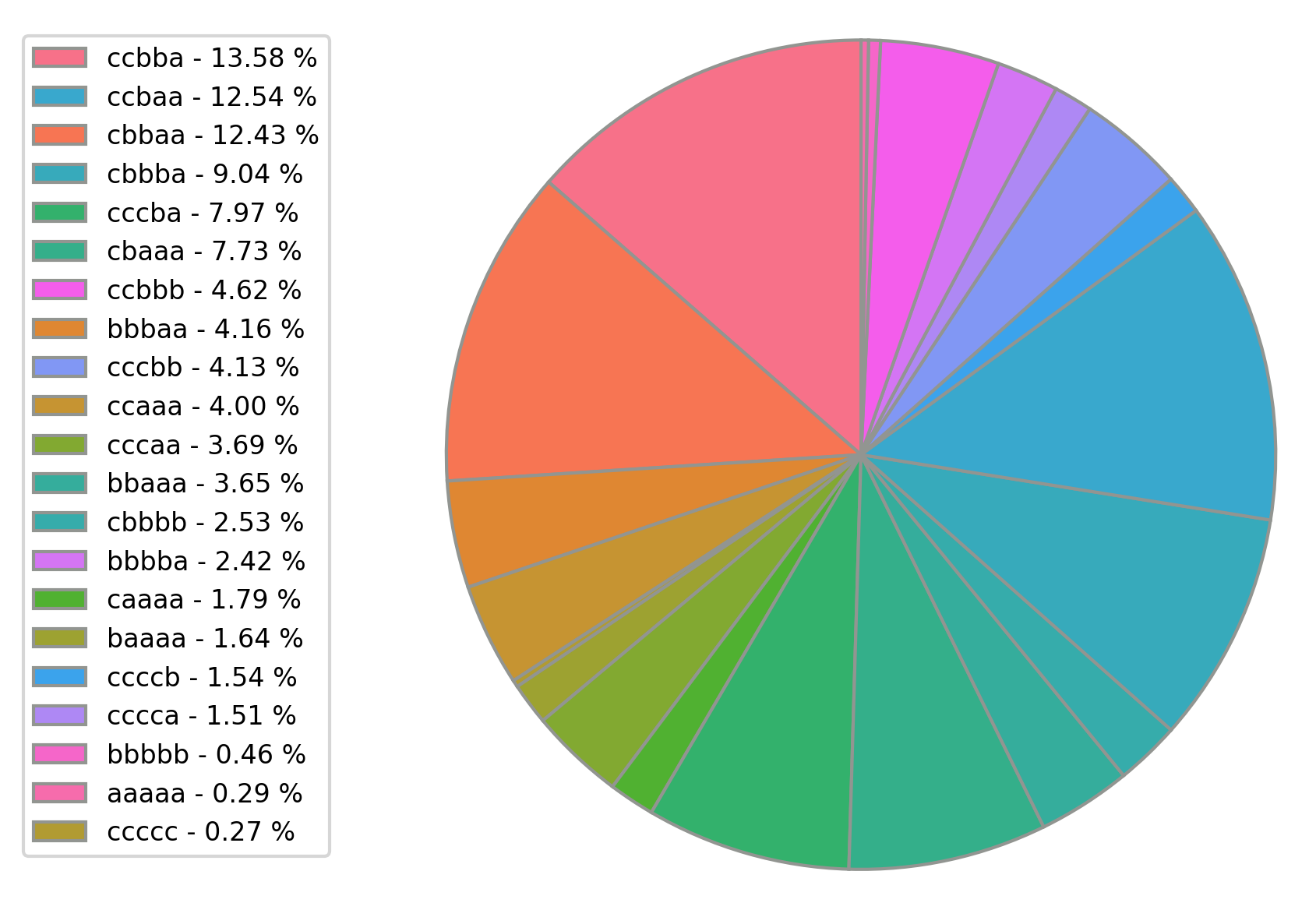}
  \caption{Distribution of initial state for 5v5 simple scenario}
  \label{fig:sc_5v5}
\end{subfigure}%
\end{figure}

\newpage

\subsubsection{Detailed Results on SMACv2}
\label{app_sc_sr}

Table ~\ref{tab:3v3sr} and \ref{tab:5v5sr} show the detailed performance of different algorithms in the 3v3 and 5v5 scenarios, respectively.

\begin{table}[!ht]
  \centering
  \resizebox{\textwidth}{!}{%
    \begin{tabular}{lccccc}
        \toprule
        \multirow{2}{*}{
            \parbox[c]{.1\linewidth}{\centering Env: 3v3}}
             & \multicolumn{2}{c}{RL} & &
            \multicolumn{2}{c}{Imitation Learning} \\ 
            \cmidrule{2-3} \cmidrule{5-6}  & {2 PPO + 1 Rand} & {3 PPO} & & {BC} & {2 PPO + 1 Diffusion}  \\

        \cmidrule(r){1-1} \cmidrule(rl){2-6} 
          $ABC \to ABC$ (ID) & {0.18 (0.01)} & {0.58 (0.02)}  & &  {0.58 (0.07)} & \textcolor{blue}{0.59 (0.04)}  \\
          $ABC \to AAA$ (Simple) & {0.07 (0.03)} & {0.52 (0.03)}  &  & 
          {0.56 (0.02)} & 
          \textcolor{blue}{0.59 (0.02)}   \\
          \cmidrule(r){1-2} \cmidrule(rl){2-6} 
          $AAA \to AAA$ (ID) & {0.09 (0.04)} & \textcolor{blue}{0.63 (0.02)}  &   & 
          {0.6 (0.02)} & 
          {0.61 (0.05)}   \\
          $AAA \to ABC$ (Hard) &{0.11 (0.02)} & {0.42 (0.02)}  &  & 
          {0.4 (0.06)} &
          \textcolor{blue}{0.49(0.02)}   \\
        \bottomrule
    \end{tabular}
  }
  \caption{Success rate of each agent in 100 rounds. The first two rows correspond to the simple setting of generalization to states with different support and the last two rows correspond to non-overlapping support. Numbers in the parenthesis represent the standard error over 3 seeds. The best performing method is labeled with blue color. The 2 PPO + 1 Rand column shows the effect of replacing one PPO trained agent with a random agent as a baseline for comparison against the 2 PPO + 1 Diffusion case.}
\label{tab:3v3sr}
\end{table}

\begin{table}[!ht]
  \centering
  \resizebox{\textwidth}{!}{%
    \begin{tabular}{lccccc}
        \toprule
        \multirow{2}{*}{
            \parbox[c]{.1\linewidth}{\centering Env: 5v5}}
             & \multicolumn{2}{c}{RL} & &
            \multicolumn{2}{c}{Imitation Learning} \\ 
            \cmidrule{2-3} \cmidrule{5-6}  & {4 PPO + 1 Rand} & {5 PPO} & & {BC} & {4 PPO + 1 Diffusion}  \\
        \cmidrule(r){1-1} \cmidrule(rl){2-6} 
          $ABC \to ABC$ (ID) & {0.22 (0.04)} & {0.64 (0.05)} &  &  
          {0.56 (0.05)} & 
          \textcolor{blue}{0.66 (0.01)} \\
          $ABC \to AAA$ (Simple) & {0.11 (0.03)} & {0.54 (0.04)} &  &  
          {0.52 (0.05)} &
          \textcolor{blue}{0.56 (0.02)}   \\
          \cmidrule(r){1-2} \cmidrule(rl){2-6} 
          $AAA \to AAA$ (ID) & {0.14 (0.02)} & \textcolor{blue}{0.58 (0.04)}   &  & 
          {0.54 (0.04)} &
          {0.55 (0.03)}  \\
          $AAA \to ABC$ (Hard) & {0.11 (0.02)} & {0.26 (0.05) } &  & 
          {0.28 (0.04)} &
          \textcolor{blue}{0.32 (0.04)}    \\
        \bottomrule
    \end{tabular}
    }
  \caption{Success rate of each agent in 100 rounds. The first two rows correspond to the simple setting of generalization to states with different support and the last two rows correspond to non-overlapping support. Numbers in the parenthesis represent the standard error over 3 seeds. The best performing method is labeled with blue color. The 4 PPO + 1 Rand column shows the effect of replacing one PPO trained agent with a random agent as a baseline for comparison against the 4 PPO + 1 Diffusion case.}
  \label{tab:5v5sr}
\end{table}

\subsubsection{Detailed Results for Ablation}
\label{app_sc_ab}

The ablation result for 3v3 and 5v5 scenarios are shown below in Table \ref{tab:scab3} and Table \ref{tab:sc_ab5}. The first column is the success rate without conditioning (No Cond). The second column represents concatenating the conditioning with time embedding (Concat). The last column represents passing conditioning as another input beside the trajectory to the cross-attention block (Attn).

\begin{table}[ht]
  \centering
  \caption{Ablation for Diffusion on 3v3}
    \begin{tabular}{lcccc}
        \toprule
        \multirow{2}{*}{
            \parbox[c]{.1\linewidth}{\centering Env 3v3}} && 
            \multicolumn{3}{c}{2 PPO + 1 Diffusion} \\ 
            \cmidrule{2-5} & & {No Cond} & {Concat} & {Attn}  \\

        \cmidrule(r){1-1} \cmidrule(rl){2-5} 
          $ABC \to ABC$ (ID) & & {0.55$\pm$0.03}   & {0.59$\pm$0.04}  &  {0.59$\pm$0.05} \\
          $ABC \to AAA$ (Simple) & & {0.5$\pm$0.06}    & {0.59$\pm$0.02}  & {0.59$\pm$0.02}   \\
          \cmidrule(r){1-2} \cmidrule(rl){2-5} 
          $AAA \to AAA$ (ID) & & {0.4$\pm$0.03}    & {0.64$\pm$0.03}  & {0.61$\pm$0.05}   \\
          $AAA \to ABC$ (Hard) & & {0.28$\pm$0.03}    & {0.44$\pm$0.05}  & {0.49$\pm$0.02}   \\
        \bottomrule
    \end{tabular}
  \label{tab:scab3}
\end{table}

\begin{table}[ht]
  \centering
  \caption{Ablation for Diffusion on 5v5}
    \begin{tabular}{lcccc}
        \toprule
        \multirow{2}{*}{
            \parbox[c]{.1\linewidth}{\centering Env 5v5}}
             &&
            \multicolumn{3}{c}{4PPO + 1 Diffusion} \\ 
            \cmidrule{2-5}  & & {No Cond} & {Concat} & {Attn}  \\
        \cmidrule(r){1-1} \cmidrule(rl){2-5} 
          $ABC \to ABC$ (ID) &  & {0.53$\pm$0.04}   & {0.59$\pm$0.03}  &  {0.66$\pm$0.01} \\
          $ABC \to AAA$ (Simple) &  & {0.50$\pm$0.03}    & {0.50$\pm$0.01}  & {0.56$\pm$0.02}   \\
          \cmidrule(r){1-2} \cmidrule(rl){2-5} 
          $AAA \to AAA$ (ID) &  & {0.47$\pm$0.08}    & {0.55$\pm$0.03}  & {0.58$\pm$0.04} \\
          $AAA \to ABC$ (Hard) &  & {0.27$\pm$0.03}    & {0.32$\pm$0.04}  & {0.30$\pm$0.04}   \\
        \bottomrule
    \end{tabular}
  \label{tab:sc_ab5}
\end{table}

\subsubsection{2v2}
\label{app_sc_2v2}

The success rates for StarCraft 2v2 are shown below in Table \ref{tab:2v2}. We can see that out-of-combination cases did not cause the performance to drop drastically for MAPPO. This is because the number of combinations in 2v2 is very limited (e.g. $aa, bb, ab$), and if one agent dies, MAPPO has encountered scenarios of playing with each unit type individually, therefore falling back to in distribution state again. This scenario also exists for 5v5 and 3v3 but only at the end of each game when only one agent is left. 

\begin{table}[!ht]
  \centering
  \caption{SMAC II success rate for 2v2}
  \resizebox{\textwidth}{!}{%
    \begin{tabular}{lcccccccc}
        \toprule
        \multirow{2}{*}{
            \parbox[c]{.2\linewidth}{\centering Env}}
             & \multicolumn{1}{c}{BC} && \multicolumn{2}{c}{MAPPO} &&
            \multicolumn{3}{c}{Diffusion} \\ 
            \cmidrule{2-2} \cmidrule{4-5} \cmidrule{7-9} & {BC}
            & & {1 PPO + 1 Rand} & {5 PPO} & & {No Cond} & {Concat} & {Attn}  \\

        \cmidrule(r){1-1} \cmidrule(rl){2-9} 
          $ABC \to ABC$ (ID) & {0.54$\pm$0.02} & & {0.06$\pm$0.02}  & \textcolor{blue}{0.62$\pm$0.05}  &    & {0.43$\pm$0.04}   & {0.56$\pm$0.03}  &  {0.57$\pm$0.067} \\
          $ABC \to AAA$ (Simple) & {0.47$\pm$0.02} & & {0.02$\pm$0.01} & \textcolor{blue}{0.57$\pm$0.02}   &    & {0.44$\pm$0.02}    & {0.55$\pm$0.06}  & {0.49$\pm$0.06}   \\
          \cmidrule(r){1-1} \cmidrule(rl){2-9} 
          $AAA \to AAA$ (ID) & {0.57$\pm$0.01} & & {0.01$\pm$0.01} & \textcolor{blue}{0.64$\pm$0}  &  & {0.38$\pm$0.02}    & \textcolor{blue}{0.63$\pm$0.04}  & \textcolor{blue}{0.63$\pm$0.02}   \\
          $AAA \to ABC$ (Hard) & {0.4$\pm$0.03} & & {0.04$\pm$0.02} & \textcolor{blue}{0.44$\pm$0.02}  &  & {0.29$\pm$0.02}    & \textcolor{blue}{0.43$\pm$0.08}  & {0.41$\pm$0.06}   \\
        \bottomrule
    \end{tabular}
  }
  \label{tab:2v2}
\end{table}

\subsubsection{More Rendering of States Predicted by the Diffusion model}
\label{app_render}

More rendering of the future states predicted by the diffusion model is shown in Figure \ref{fig:diff_rend1}. 

\begin{figure}[!ht]
\centering
\begin{subfigure}{\textwidth}
  \centering
  \includegraphics[width=\linewidth]{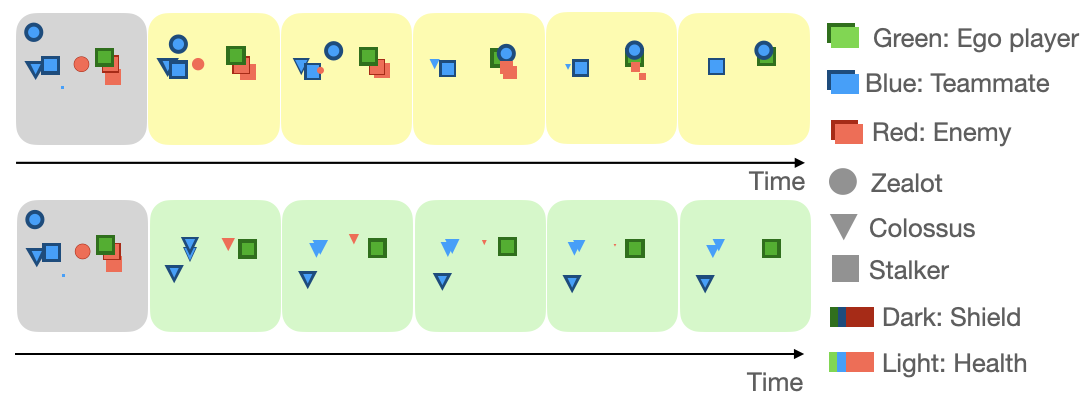}
  \label{fig:sc}
\end{subfigure}
\caption{Rendering of future states predicted by the diffusion model given different conditionings. The grey box is the initial state. Yellow boxes are conditioned on the type of unit in the initial state. Green boxes are conditioned on all Triangles. Smaller sizes represent less shield or health. }
\label{fig:diff_rend1}
\end{figure}

\newpage

\subsection{Model Runtime and GPU Memory}
\label{app_sc_rm}

We include the training time and GPU memory used for the conditioned diffusion model below in Table \ref{time_table} and \ref{gpu_table}.

\begin{table}[ht]
\centering
\begin{tabular}{{@{}ccccc@{}}}
\toprule
 {Training Time} & \textbf{Roundabout} & \textbf{SMACv2 2v2} & \textbf{SMACv2 3v3} & \textbf{SMACv2 5v5} \\ \hline
 PPO & 0.5h & 9h & 9h & 9h \\ 
 Diffusion &  1h & 48h & 70h & 98h \\ 
 \bottomrule
\end{tabular}
\caption{Training time for PPO and conditioned diffusion model in different environments. }
\label{time_table}
\end{table}

\begin{table}[ht]
\centering
\begin{tabular}{{@{}ccccc@{}}}
\toprule
 {GPU Memory} & \textbf{Roundabout} & \textbf{SMACv2 2v2} & \textbf{SMACv2 3v3} & \textbf{SMACv2 5v5} \\ 
 \hline
 Diffusion & 542 MiB & 1004 MiB & 2892 MiB & 4096 MiB\\ 
 \bottomrule
\end{tabular}
\caption{GPU Memory for training conditioned diffusion model in different environments. }
\label{gpu_table}
\end{table}

\newpage
\subsubsection{Substituting More MAPPO Agents with Diffusion Agents}
\label{app_sc_more}

\begin{wrapfigure}[13]{r}{0.4\linewidth}
\vspace{-3mm}
    \centering
    \includegraphics[width=0.98\linewidth]{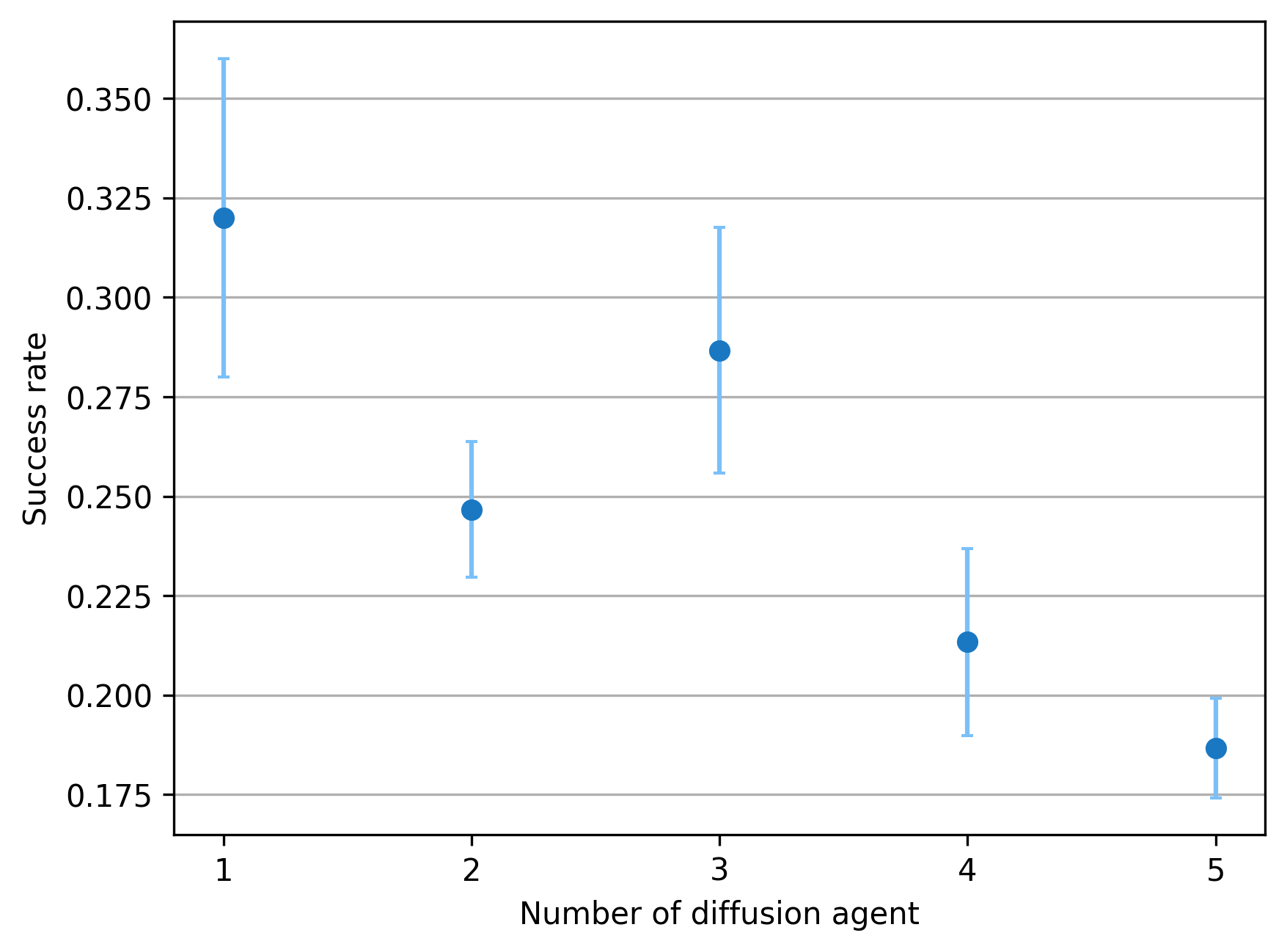}
\vspace{-2mm}
    \caption{
    \label{fig:sc_more}
    \footnotesize{Success rate vs number of agents in SMACv2 5v5 hard scenario that are replaced with diffusion agents. Replacing more than one MAPPO agent with diffusion agents hurts performance.}}
\vspace{-5mm}
\end{wrapfigure}
We would like to ask the question of what about replacing more than one MAPPO agent with diffusion model. Figure \ref{fig:sc_more} shows that the number of diffusion models does not have a positive correlation with the success rate. This is because MAPPO can learn a collaborative policy between actors and simply adding more ego-centric diffusion models will break the coordination between actions. Also, since the diffusion model is trained to play with all PPOs teammates, replacing other PPO actions with actions generated by diffusion models will cause a distribution shift that is hard to quantify.

\end{document}